\newcommand\DoToC{%
  \startcontents
\hypersetup{colorlinks=true, linkcolor=pierCite}
  \printcontents{}{1}{\subsection*{\textbf{Table of contents}}}
  \vskip3pt\vskip5pt
}
\author{Piersilvio De Bartolomeis\footnote{These authors contributed equally.}~}
\author{Javier Abad$^*$}
\author{Konstantin Donhauser}
\author{Fanny Yang}
\affil{Department of Computer Science, ETH Zürich}
\title{Hidden yet quantifiable: A lower bound  for confounding strength using randomized trials}
\date{}
\begin{document}

\maketitle

\begin{abstract}
In the era of fast-paced precision medicine, observational studies play a major role in properly evaluating new treatments in clinical practice. Yet, unobserved confounding can significantly compromise causal conclusions drawn from non-randomized data. We propose a novel strategy that leverages randomized trials to quantify unobserved confounding. First, we design a statistical test to detect unobserved confounding above a certain strength. Then, we use the test to estimate an asymptotically valid lower bound on the unobserved confounding strength. We evaluate the power and validity of our statistical test on several synthetic and semi-synthetic datasets. Further, we show how our lower bound can correctly identify the absence and presence of unobserved confounding in a real-world example.\footnote{See our GitHub repository for the source code: \url{https://github.com/jaabmar/confounder-lower-bound}.}

\end{abstract}

\section{Introduction}
Monitoring the performance of a newly approved treatment is crucial, a process commonly referred to as \emph{post-marketing surveillance}~\citep{vlahovic2011postmarketing}. Nowadays, the U.S. Food and Drug Administration promotes the integration of observational data in this process to address the shortcomings of randomized evidence~\citep{platt2018fda,klonoff2020new}. This strategy is essential for validating personalized treatments, like immunotherapy for certain types of cancer, where randomized evidence is scarce, and treatment costs are substantial~\citep{hayden2016gene,goetz2018personalized}.

 Yet, unobserved confounding can significantly compromise causal conclusions drawn from observational data.  To tackle this issue, sensitivity analysis has been the prevalent paradigm since its conception by~\citet{cornfield59}.
 This field studies how a specific strength of unobserved confounding affects causal conclusions and introduces the concept of a \emph{critical value}~\citep{vanderweele2017sensitivity, jin2023sensitivity}, i.e. the minimum strength unobserved confounders would need to have to explain away the estimated treatment effect. However, critical values are solely based on observational data and can differ substantially from the \emph{true confounding strength}. As a result, epidemiologists often rely on heuristic judgments to decide whether an observational study is flawed.

Estimating the true confounding strength is infeasible without further assumptions.
Yet, once a treatment gains approval, we may have access to a randomized trial that allows for more effective strategies to address unobserved confounding. A recent line of works proposes to combine the estimators from randomized and observational data, e.g. see \citet{colnet2020causal,brantner2023methods} for a survey. However, these methods crucially rely on some prior knowledge of the confounding bias structure, that is not always available in practice. 
 
\begin{figure*}
    \centering 
    \includegraphics[width=\textwidth]{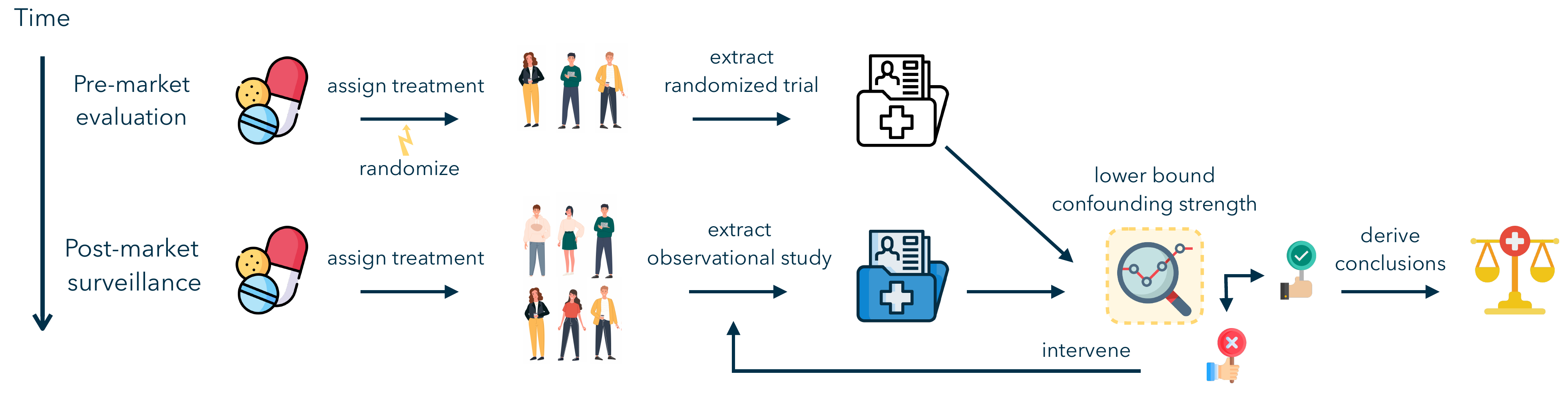} 
    \caption{An illustrative example of the drug regulatory process: our lower bound allows taking proactive measures to address the unobserved confounding problem.} 
    \label{fig:motivating_example} 
\end{figure*}

We propose an alternative strategy to leverage randomized trials, that is, to test and quantify the true confounding strength. In particular, if strong confounding is detected, epidemiologists can take proactive measures to correct it. Most directly, they can identify and incorporate relevant covariates into the study design if they were initially overlooked~\citep{dreyer2018advancing}. On the other hand, if small confounding is detected, epidemiologists can continue their analysis (see~\Cref{fig:motivating_example} for an illustration of the pipeline). More concretely, our contributions are as follows.
\begin{itemize}
\vspace{-2mm}
\item In~\Cref{sec:test}, we introduce the first statistical test to detect unobserved confounding above a certain strength. Further, we show how the test can be used to estimate an asymptotically valid lower bound on the true confounding strength.
\item In \Cref{sec:syn_exp}, we evaluate the finite-sample validity and power of our test on several synthetic and semi-synthetic datasets. 
\item In~\Cref{sec:whi_exp},
we showcase through a real-world example how our approach leads to conclusions that align with established epidemiological knowledge.
\end{itemize}

\subsection{Related work}
\label{sec:related_work}
Our approach is closely related to a line of work that proposes statistical tests for the \emph{presence} of unobserved confounding. In particular, several works leverage randomized trials to detect unobserved confounding. These tests check for significant differences between average treatment effect estimates obtained from randomized and observational data \citep{viele2014use,yangelastic, morucci2023double,hussain2022falsification}. More sophisticated approaches also test for differences in conditional average treatment effect estimates \citep{hussain2023falsification} and account for right-censored outcomes~\citep{demirel2024benchmarking}.

Similarly, other works have designed statistical tests using instrumental variables and negative control outcomes instead of randomized trials~\citep{lipsitch2010negative, de2014testing, donald2014testing,sofer2016negative}. Additionally, multiple observational studies can be leveraged to test conditional independences and detect unobserved confounding~\citep{karlsson2023detecting}.

In contrast to our test, these works have a significant limitation: they cannot quantify the true confounding strength. Even in infinite samples, they reject observational studies with negligible confounding. In real-world settings, where some degree of confounding will likely be present, testing for the absence of unobserved confounding can be too restrictive. 

Finally, another line of works proposes calibrating the value of confounding strength using only observational data~\citep{hsu2013calibrating,veitch2020sense}. However, the true confounding
 strength can be arbitrarily different from the calibrated strength, and there is no theoretical result for how these two quantities are related, even with infinite samples.

\section{Setting and notation}
\label{sec:setting}
We have access to data from a randomized trial~(rct) and an observational study~(os), which come from an underlying distribution $\pfull^\diamond$ 
over $\left(X, U, Y(0), Y(1),Y, T\right)$, for $\diamond \in  \{\rct,\obs\}$. Here, $\left(X, U\right) \in \RR^\xdim \times \RR^\udim$ is a vector of confounders, $\left(Y(0), Y(1)\right)$ are real-valued bounded potential outcomes, $Y \in \RR$ is the observed outcome, and $T \in \{0,1\}$ is a binary treatment 
indicator. However, the confounder $U$ and the potential outcomes are never observed, that is, we can only sample from the distributions $\prct \defeq \margin(\pfullrct)$ and $\pobs \defeq \margin(\pfullobs)$, where $\margin(\pfull)$ denotes the marginal distribution of $(X,Y,T)$ under $\pfull$.

\begin{wrapfigure}{r}{0.4\textwidth} 
\vspace{-5mm}
\begin{center}
    \includegraphics[width=0.4\textwidth]{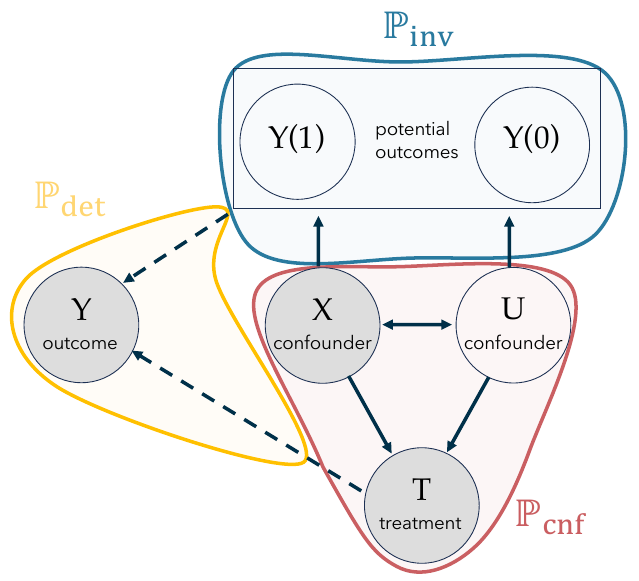}
  \end{center}
  \caption{Graphical model that captures the Neyman-Rubin potential outcome framework with unobserved confounder $U$. $\pinv$ is the causal mechanism that does not change between the randomized trial and the observational study, while $\pconf$ changes across studies. 
  For the randomized trial, we assume there is no arrow from the confounders ($X, U$) to the treatment indicator $T$ due to its internal validity. 
  Observed variables are colored in shades of grey.}
          \label{fig:dag}
\vspace{-2mm}
\end{wrapfigure}

We assume that  we can factorize the full distribution as follows for $\rct$ and $\obs$
\begin{align}
\pfull^{\diamond} &= \underbrace{\p_{Y  \mid Y(1),Y(0),T}}_{\defeq \pdet}~\underbrace{\p_{Y(1),Y(0)  \mid X,U}}_{\defeq \pinv}~\underbrace{\p^{\diamond}_{ X,T, U}}_{\defeq \pconf^{\diamond}},\label{eq:factor}
\end{align}
where $\pdet$ is given by $Y=Y(T)$, $\pinv$ is invariant across studies, and $\pconf^\diamond$ differs for $\diamond \in \{\rct,\obs\}$. This factorization captures the essence of the potential outcome framework, where $Y(1)$ and $Y(0)$ do not depend on $T$
while being more general. In particular, it allows for shifts in the marginal distribution of the observed and unobserved confounders. We illustrate the corresponding graphical model in \Cref{fig:dag}. While there have been many attempts to unify potential outcomes and graphical models, most prominently Single World Intervention Graphs~\citep{richardson2013single}, here the graph is purely used for illustrative purposes and does not play a formal role in our analysis.

We now introduce three additional assumptions required for the validity of our statistical test and the resulting lower bound.
First, we require transportability of the conditional average treatment effect (CATE).

\begin{assumption}[Transportability] 
\label{asm:transportability}
The conditional average treatment effect remains invariant across studies, that is
\begin{align*}
\EE_{\pfullobs}\left[Y(1) -Y(0)  \mid X\right]
= \EE_{\pfullrct}\left[Y(1) -Y(0)  \mid X\right].
\end{align*}
\end{assumption}
This property is standard for generalizing the findings of randomized trials to another population~\citep{o2014generalizing,colnet2020causal,degtiar2023review}, and is a weaker assumption than  ignorability of study selection~\citep{sugden1984ignorable, hotz2005predicting}
or sample ignorability of treatment effects~\citep{kern2016assessing}. 

Second, we assume that the randomized trial is internally valid.

\begin{assumption}[Internal validity] 
\label{asm:internalvalid}
The treatment is assigned independent of the covariates and the potential outcomes, that is, 
\begin{align*}
\pconfrct  =\prct_{T} \prct_{X, U}, \quad \mathrm{with} \quad \prct_T(T=1) = \pi \in (0,1). 
\end{align*}
\end{assumption} 
Internal validity holds by design in a completely randomized experiment, allowing for an unbiased estimation of the treatment effect. Observational studies, on the other hand, can have arbitrary confounding structures reflected in $\pconf$, i.e. $\pconfobs  =\pobs_{ T \vert X,U} ~\pobs_{X,U}$.

Finally, we assume that the population in the observational study includes the population in the trial.
\begin{assumption}[Support inclusion] 
\label{asm:nested_support}
The support of the randomized trial is included in the support of the observational study, i.e.
$$\supp(\prct_X) \subseteq \supp(\pobs_X).$$
\end{assumption}
 This assumption is strictly weaker than the positivity of trial participation~\citep{stuart2011use, hartman2015sample, andrews2017weighting, nie2021covariate, colnet2022reweighting}. It is also expected to hold in our setting, as it aligns with the design of observational studies by regulatory agencies for drug monitoring ~\citep{he2020clinical}, and particularly for post-marketing surveillance~\citep{franklin2019evaluating, schurman2019framework}.

\subsection{Sensitivity analysis}
\label{sec:sensana}
Sensitivity analysis is commonly used to account for unobserved confounding in observational data. In particular, this approach estimates an interval for the treatment effect that depends on an assumed
\emph{confounding strength} $\confvalue$ of $\pcnfos$. Throughout the paper,  we define the confounding strength using the widely accepted marginal sensitivity model~\citep{tan2006distributional}. 

More formally,
we assume that $\pcnfos$ belongs to the set $\msm(\confvalue)$ of distributions
that have bounded odds ratio,   
\begin{align*}
	&\msm(\confvalue) \defeq  \Big  \{ \pconf: \frac{1}{\confvalue} \leq   \frac{\pconf(T=1 \mid X, U)}{\pconf(T=0 \mid X, U)} / \frac{\pconf(T=1 \mid X)}{\pconf(T=0 \mid X)}  \leq \confvalue, \;\; \mathrm{a.s.}   \Big \}.
\end{align*}
Under this notion of confounding strength,  we can define a set of full distributions $\pfulltilde$ that are compatible with the marginal distribution of the observational study $\pobs$ and have a bounded odds ratio. 
 \begin{definition}[Marginal sensitivity set]
\label{def:msm} 
Given a distribution $\pobs$ over $(X,Y,T)$ and a confounding strength $\confvalue \geq 1$, we define the set $\msm(\pobs, \confvalue)$ of distributions $\pfulltilde$, as
\begin{align}
\msm(\mathbb \pobs, \confvalue) \defeq   
\{ \pfulltilde = \pdet \pinvtilde \pcnftilde: \pcnftilde \in \msm(\confvalue) 
 \quad \mathrm{and} \quad\margin(\pdet\pinvtilde\pcnftilde) = \pobs \nonumber
\}.
\end{align}
 \end{definition}
In other words, this set contains all the full distributions that could have induced the marginal distribution of the observational study $\pobs$.
Further, since the marginal sensitivity set contains $\pfullobs$ if $\confvalue$ is well-specified, we can partially identify the (conditional) treatment effect as follows. 
\begin{definition}[Sensitivity bounds] 
\label{def:cate_sensitivity}
We define the conditional average treatment effect~(CATE) as 
\begin{align*}
\yone (X, \pfull) &\defeq \EE_{\pfull}\left[ Y(1) - Y(0) \mid X \right], 
\end{align*}
and the upper and lower bounds on CATE within the marginal sensitivity set as 
\begin{align*}
\yone^+_\confvalue(X) \defeq \sup _{\pfulltilde \in \msm(\pobs, \confvalue)} \yone(X,\pfulltilde),\quad 
\yone^{-}_\confvalue(X) \defeq \inf_{\pfulltilde \in \msm(\pobs, \confvalue)} \yone(X,\pfulltilde).
\end{align*}
Further, we define the average treatment effect (ATE) over a marginal distribution $\p_X$ that can differ from the marginal in $\pfull$ as
	$$\yone(\p_X,\pfull) \defeq \EE_{\p_X}\left[\cate(X,\pfull)\right],$$
and the upper and lower bounds on ATE as
\begin{align*}
\yone^+_\confvalue(\p_X) \defeq \EE_{\p_X}\left[\cate^+_\confvalue(X)\right],  \quad 	\yone^-_\confvalue(\p_X) \defeq \EE_{\p_X}\left[\cate^-_\confvalue(X)\right].
\end{align*}
\end{definition}

Above, we do a slight abuse of notation by defining $\mu$ as both a function and a real number, depending on its argument.
Several estimators have recently emerged in the literature for the CATE bounds~\citep{kallus2019interval,jesson2021quantifying,oprescu2023b}  and for the ATE bounds~\citep{zhao2019sensitivity,dorn2021doubly,dorn2022sharp}. We will leverage these estimators to construct a statistical test that detects unobserved confounding above a certain strength.

\section{Methodology}\label{sec:test}

We would like to test whether the unobserved full distribution $\pfullobs$, which marginalizes to $\pobs$, has confounding strength at most $\confvalue$. This is captured by the following null hypothesis
$$\hnull(\confvalue): \pfullobs \in \msm(\pobs,\confvalue).$$ 
Note that in the special case where $\confvalue=1$,  
the problem reduces to testing whether there are no unobserved confounders, i.e. $(Y(1),Y(0)) \indep T \mid X$ under $\pobs$. We refer to this case, which has been recently studied in the literature~(see~\Cref{sec:related_work}), as \emph{binary testing} for unobserved confounding.

In real-world scenarios, binary tests can be overly stringent, as they invalidate an observational study even if the unobserved
confounding strength is negligible. To overcome this limitation, we propose the first test, to the best of our knowledge, for the general case where $\confvalue$ is greater than one. In particular, underlying our testing procedure is a simple observation that follows from the
sensitivity analysis bounds: When the null hypothesis is true for some confounding strength $\Gamma$, the average treatment effect under some target population should fall between the valid upper and lower bounds constructed from the observational study. \begin{lemma}
\label{lemma:test_1} For any $\pfull$ which satisfies transportability, i.e. 
$
\yone(X,\pfull) =\yone(X,\pfullobs),
$
and any $\p_X$ which satisfies support inclusion, i.e. $\supp(\p_X) \subseteq \supp(\pobs_X)$, it holds that
\begin{equation*}
\pfullobs \in \msm(\pobs,\confvalue) \implies \yone(\p_X, \pfull) \in [\yone_\confvalue^-(\p_X), \yone_\confvalue^+(\p_X)].  
\end{equation*}
\end{lemma}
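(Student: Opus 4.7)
The plan is to chain three simple observations: a pointwise sensitivity bound on the observational CATE, transportability of the CATE, and monotonicity of expectation under support inclusion.

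First, I would unpack the hypothesis $\pfullobs \in \msm(\pobs, \confvalue)$. By \Cref{def:msm}, $\pfullobs$ is itself a candidate in the infimum and supremum that define $\yone_\confvalue^-(X)$ and $\yone_\confvalue^+(X)$ in \Cref{def:cate_sensitivity}. Hence, for $\pobs_X$-almost every $X$,
\begin{equation*}
\yone_\confvalue^-(X) \;\leq\; \yone(X, \pfullobs) \;\leq\; \yone_\confvalue^+(X).
\end{equation*}
This is the only place the marginal sensitivity model is actually used; once the pointwise bound on $\yone(\cdot, \pfullobs)$ is in hand, the rest of the argument is purely about how this bound transfers.

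Second, I would apply the transportability hypothesis on $\pfull$, namely $\yone(X, \pfull) = \yone(X, \pfullobs)$, to replace $\yone(X, \pfullobs)$ by $\yone(X, \pfull)$ in the inequality above, giving $\yone_\confvalue^-(X) \leq \yone(X, \pfull) \leq \yone_\confvalue^+(X)$ for $\pobs_X$-almost every $X$. Then I would invoke \Cref{asm:nested_support} (support inclusion): since $\supp(\p_X) \subseteq \supp(\pobs_X)$, the same inequalities also hold $\p_X$-almost surely. Finally, taking expectation under $\p_X$ and using monotonicity of expectation, together with the definition
\begin{equation*}
\yone_\confvalue^{\pm}(\p_X) \;=\; \EE_{\p_X}\!\left[\yone_\confvalue^{\pm}(X)\right], \qquad \yone(\p_X, \pfull) \;=\; \EE_{\p_X}\!\left[\yone(X, \pfull)\right],
\end{equation*}
yields the claimed inclusion $\yone(\p_X, \pfull) \in [\yone_\confvalue^-(\p_X), \yone_\confvalue^+(\p_X)]$.

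The only conceptual subtlety, and the one step I would be careful about, is the almost-sure qualifier: the sensitivity-bound inequality only holds on $\supp(\pobs_X)$, not everywhere, so without support inclusion the expectation under $\p_X$ could be ill-defined on a set of positive $\p_X$-mass that is $\pobs_X$-null. \Cref{asm:nested_support} is precisely what rules this out, so the argument passes through without technical difficulty. Beyond this, the proof is essentially a one-line consequence of the definitions, so no intricate calculation is needed.
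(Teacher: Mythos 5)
Your proposal is correct and follows essentially the same route as the paper's proof: membership of $\pfullobs$ in $\msm(\pobs,\confvalue)$ gives the pointwise CATE bound, transportability transfers it to $\pfull$, and support inclusion justifies taking expectations under $\p_X$. Your explicit remark on the almost-sure qualifier is a slightly more careful treatment of a point the paper's one-line argument passes over implicitly, but the substance is identical.
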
\begin{proof}
First, note how $\yone(X, \pfull) \in [\yone_\confvalue^-(X), \yone_\confvalue^+(X)]$ for all $X \in \supp(\pobs_X)$ when the null hypothesis $\hnull(\confvalue)$ is true, due to the transportability assumption and the definition of CATE sensitivity bounds. 
The result then follows by taking expectations with respect to the corresponding marginals $\p_X$ on both sides.
\end{proof}

\subsection{Statistical tests for $\hnull(\confvalue)$}
In what follows, we have access to a randomized trial $\datarct = \{(X_i,Y_i,T_i)\}_{i=1}^{\nrct}$ sampled i.i.d. from the distribution $\prct$, and an observational study $\dataobs = \{(X_i,Y_i,T_i)\}_{i=1}^{\nobs}$, sampled i.i.d. from the distribution $\pobs$. We first propose estimates for the average treatment effect under two target populations. Then, we leverage these estimates together with the sensitivity bounds to design an asymptotically valid statistical test at significance level $\alpha$. Finally, we show how such a test can be used to establish an asymptotically valid lower bound on the unobserved confounding strength. 

\label{sec:test_presentetion}

\paragraph{Estimating the ATE} We discuss here how the average treatment effect can be estimated using data from the randomized trial. First, we define a target population $\ptar_X$ to estimate the ATE. 
Then, the following lemma shows how the choice of $\prct_X$ and $\pobsres_X \defeq \pobs_X \mid X \in \supp(\prct)$ allows us to identify ATE  using data sampled from the randomized trial marginal distribution $\prct$.

\begin{lemma}
\label{lemma:identify_ate}
For $\diamond \in \{ \rct, \obsres\}$, under Assumptions~\ref{asm:transportability}, \ref{asm:internalvalid} and     
\ref{asm:nested_support}, we have
\begin{align*}
\yone(\ptar_X, \pfulltar) &= \EE_{\prct} \left[ Y\left(\frac{T}{\pi} - \frac{1-T}{1-\pi}  \right)  w(X) \right], \quad \mathrm{where}\quad   w(X) \defeq \frac{\ptar(X)}{\prct(X)}.
\end{align*}
\end{lemma}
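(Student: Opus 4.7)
The plan is to unfold the left-hand side definition, then perform three substitutions in order: apply transportability to rewrite the CATE in terms of the RCT distribution, use internal validity to express the RCT-CATE as a Horvitz–Thompson IPW expectation over observables, and finally perform a change of measure from $\ptar_X$ to $\prct_X$ using $w(X)$. The two cases $\diamond \in \{\rct, \obsres\}$ get handled in parallel, with the only care being a small argument that the key assumptions survive when we restrict to the overlap support in the $\obsres$ case.

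First, starting from $\yone(\ptar_X, \pfulltar) = \EE_{\ptar_X}[\cate(X,\pfulltar)]$, I would invoke Assumption~\ref{asm:transportability} to replace $\cate(X,\pfulltar)$ by $\cate(X,\pfullrct)$. When $\diamond=\rct$ this is a tautology; when $\diamond=\obsres$, note that restricting $\pfullobs$ to $X\in\supp(\prct_X)$ does not alter the conditional $\EE[Y(1)-Y(0)\mid X]$, so transportability passes through unchanged. Next, under Assumption~\ref{asm:internalvalid}, $T$ is independent of $(X,U,Y(1),Y(0))$ in the RCT, and because $Y=Y(T)$ by $\pdet$, standard IPW algebra gives
\begin{equation*}
\EE_{\pfullrct}[Y(1)\mid X] = \EE_{\prct}\!\left[\tfrac{TY}{\pi}\,\Big|\,X\right],\qquad \EE_{\pfullrct}[Y(0)\mid X] = \EE_{\prct}\!\left[\tfrac{(1-T)Y}{1-\pi}\,\Big|\,X\right],
\end{equation*}
so $\cate(X,\pfullrct) = \EE_{\prct}\!\left[Y\left(\tfrac{T}{\pi}-\tfrac{1-T}{1-\pi}\right)\mid X\right]$.

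It remains to change the outer measure from $\ptar_X$ to $\prct_X$. For this I need $\ptar_X \ll \prct_X$ so that the Radon–Nikodym derivative $w(X) = d\ptar_X/d\prct_X$ is well-defined. For $\diamond=\rct$ this is trivial, and for $\diamond=\obsres$ we have $\supp(\pobsres_X) = \supp(\pobs_X)\cap\supp(\prct_X) = \supp(\prct_X)$ by Assumption~\ref{asm:nested_support}, giving $\pobsres_X \ll \prct_X$ (after normalization). Then
\begin{equation*}
\EE_{\ptar_X}[\cate(X,\pfullrct)] = \EE_{\prct_X}\!\left[w(X)\,\EE_{\prct}\!\left[Y\left(\tfrac{T}{\pi}-\tfrac{1-T}{1-\pi}\right)\Big|\,X\right]\right] = \EE_{\prct}\!\left[Y\left(\tfrac{T}{\pi}-\tfrac{1-T}{1-\pi}\right)w(X)\right],
\end{equation*}
by the tower property, which is the desired identity.

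There is no deep obstacle here; the proof is essentially a careful bookkeeping exercise combining three well-known tricks. The only non-routine point is the $\diamond=\obsres$ case, where one has to verify that both transportability and the absolute-continuity requirement for the change of measure are preserved after restricting $\pobs$ to $\supp(\prct_X)$; both follow from Assumption~\ref{asm:nested_support} plus the fact that CATE is defined pointwise in $X$.
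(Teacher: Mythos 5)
Your proposal is correct and follows essentially the same route as the paper's proof: unfold the definition, apply transportability (Assumption~\ref{asm:transportability}) to pass to the RCT conditional, use internal validity (Assumption~\ref{asm:internalvalid}) to obtain the IPW representation, and reweight from $\prct_X$ to $\ptar_X$ via $w(X)$ using support inclusion (Assumption~\ref{asm:nested_support}). The only difference is the order in which the IPW step and the change of measure are performed, which is immaterial.
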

\Cref{lemma:identify_ate} is a well-known result in the transportability literature  \citep{cole2010generalizing,colnet2023risk}. It establishes that when the distribution shift between $\prct_X$ and $\ptar_X$ can be corrected, we can identify and estimate the ATE under $\ptar_X$. 

\paragraph{Estimating the sensitivity interval} Next, we discuss how $\yone_\confvalue^-(\ptar_X)$ and $\yone_\confvalue^+(\ptar_X)$ can be estimated using data from both the observational study and the target population $\ptar_X$. Here, the approach varies based on the target population.
\begin{itemize}[leftmargin=*]
	\item For $\ptar_X = \prct_X$, we estimate the CATE sensitivity bounds from observational data and average them over the target population. Specifically, we use the B-Learner \citep{oprescu2023b} to estimate the sensitivity analysis bounds.
	\item For $\ptar_X = \pobsres_X$, we have two options: either estimate the CATE sensitivity analysis bounds and average them, or directly estimate the ATE sensitivity analysis bounds over the target population.  In our experiments, we directly estimate the ATE sensitivity analysis bounds using either the DVDS \citep{dorn2021doubly} or the QB estimator \citep{dorn2022sharp}.
\end{itemize}   

These methods yield estimates that are valid, sharp, and efficient under more general conditions than other existing methods. Nevertheless, our testing procedure is agnostic to the choice of the sensitivity analysis bound estimator, allowing for various options to be adopted.

\paragraph{Two statistical tests} We outline our testing procedure in \Cref{algo:test}, which can be instantiated for the target populations $\rct$ and $\obsres$. This results in two statistical tests, $\test_{\rct}$ and $\test_{\obsres}$, for the null hypothesis $\hnull(\confvalue)$. The following proposition confirms their asymptotic validity.
\begin{prop}[Validity of the test]
\label{prop:test} Fix $\confvalue \in [1,\infty)$ and $\alpha \in (0,1)$, and let $\test_\diamond(\confvalue,\alpha)$ be the test defined in~\Cref{algo:test}. Under Assumptions~\ref{asm:transportability}--\ref{asm:nested_support}, the setting described in~\Cref{sec:setting}, we have, for $\hnull(\confvalue)$, 
 
(i) If it holds that $  \underset{\nrct,\nobs \to \infty}{\lim} \nrct/\nobs=  0$ and the estimators of
 the CATE sensitivity analysis bounds satisfy	
$$\|\cate_\confvalue^{\pm} - \catehat_\confvalue^{\pm}\|_{L^2(\prct)} = O_{\pobs}(\nobs^{-1/2}),$$
$\test_\rct(\confvalue, \alpha)$ is an asymptotically valid level-$\alpha$ test.
	
(ii) If it holds that $\underset{\nrct,\nobs \to \infty}{\lim}\nrct/(\nrct+\nobs)=\rho \in (0,1)$, $\EE_{\prct}[w(X)^2] < \infty$, and $\yonehat_\confvalue^+$ and $\yonehat_\confvalue^-$ are consistent estimators of the ATE sensitivity analysis bounds that satisfy
$$
 \sqrt{\nobs}\bigl(\yonehat_\confvalue^+ - \yone_\confvalue^+\bigr)\xrightarrow{\mathcal{D}} \gauss(0,(\sigma_\confvalue^+)^2), \qquad \sqrt{\nobs}\bigl(\yonehat_\confvalue^- - \meanlb\bigr)\xrightarrow{\mathcal{D}} \gauss(0,(\sigma_\confvalue^-)^2), 
$$
$\test_{\obsres}(\confvalue, \alpha)$ is an asymptotically valid level-$\alpha$ test.
\end{prop} 
We provide a complete proof in~\Cref{apx:proof_test}. Notably, Assumption (ii), together with the fixed limiting proportions of the randomized trial and observational study, is relatively mild and expected to hold for various estimators; for instance, it can be satisfied by the DVDS estimator~\citep{dorn2021doubly}. On the other hand, Assumption (i) is stronger and generally only expected to hold when $\nobs\gg\nrct$.

In essence, we propose two tests that work under different assumptions: $\test_{\rct}$ relies on a consistent estimate of the CATE sensitivity analysis bounds, while $\test_{\obsres}$ relies on directly estimating the ATE sensitivity analysis bounds together with the importance weights $w(X)$. The importance weights can be identified when the observational study and the randomized trial adhere to a nested trial design~\citep{olschewski1985comprehensive, olschewski1992analysis, choudhry2017randomized}. See \Cref{apx:nested_design} for a discussion on how the importance weights are estimated in this setting.

\paragraph{Advantages of each test} The test $\test_{\obsres}$ can be advantageous when CATE estimation is challenging (e.g. when the outcomes are binary and the classes are imbalanced or when the observational study has a limited sample size), but the weights $w(X)$ can be identified, and vice versa for the test $\test_{\rct}$. In addition, $\test_{\obsres}$ can benefit from large observational studies as the variances $(\sigmahat_\confvalue^-)^2$ and $(\sigmahat_\confvalue^+)^2$ vanish for large $\nobs$.
\vspace{-2mm}
\begin{algorithm*}[t]
\caption{Statistical test for detecting unobserved confounding}
\begin{algorithmic}[1]
\State \textbf{Input:}  $\diamond \in \{\rct,\obsres\}$, $\datarct,\dataobs$, significance level $\alpha$, confounding strength $\confvalue$.

\State Estimate $\yone(\ptar, \pfulltar)$ using the randomized trial dataset:  \begin{align*}\yonehat &=  \frac{1}{\nrct}\sum_{\substack{(X_i,T_i,Y_i)  \in \datarct}} Y_i\left(\frac{T_i}{\pi} - \frac{1-T_i}{1-\pi}  \right)w(X_i), 
\quad \sigmahat^2 = \widehat{\var}_{\prct}[\yonehat].
\end{align*} 

\State Estimate the sensitivity analysis bounds $\yonehat^{-}_\confvalue(X)$ and $\yonehat^{+}_\confvalue(X)$ using the observational study dataset, and average over the target population $\ptar$:$$
\yonehat_\confvalue^+ =
  \hat{\EE}_{\ptar_X}[ \yonehat^{+}_\confvalue(X)], \quad (\sigmahat^+_\confvalue)^2 = \widehat{\var}_{\ptar_X}[\yonehat_\confvalue^+], \quad   \yonehat_\confvalue^- = \hat{\EE}_{\ptar_X}[ \yonehat^{-}_\confvalue(X)], \quad (\sigmahat^-_\confvalue)^2 = \widehat{\var}_{\ptar_X}[\yonehat_\confvalue^-],
$$
where $\hat{\EE}[\cdot]$ denotes the empirical mean, and $\widehat{\var}[\cdot]$ denotes a consistent estimator of the sampling variance. 
\State Compute the test statistics: 
\begin{align*}  &\hat{T}_{\confvalue}^+= \frac{\yonehat^{+}_\confvalue - \yonehat}{\sigmahat^+_{\diamond}} , \quad \mathrm{where} \quad  \sigmahat^+_{\rct} =\sqrt{\varubhat +\sigmahat^2 + 2 \sigmahat_\confvalue^+ \sigmahat } \quad \mathrm{and}\quad \sigmahat^+_{\obsres} =\sqrt{\varubhat +\sigmahat^2},
\\\ &\hat{T}_{\confvalue}^- =  \frac{ \yonehat- \yonehat^{-}_\confvalue}{\sigmahat^-_{\diamond}},\quad \mathrm{where} \quad  \sigmahat^-_{\rct} =\sqrt{\varlbhat +\sigmahat^2 + 2 \sigmahat_\confvalue^- \sigmahat } \quad \mathrm{and}\quad \sigmahat^-_{\obsres} =\sqrt{\varlbhat +\sigmahat^2}.\end{align*}

\State \textbf{Output:} $\test_\diamond(\confvalue,\alpha) =  \indi \{\min\left(\hat{T}_\confvalue^+, \hat{T}_\confvalue^- \right) < z_{\alpha/2} \} $, where $z_\alpha$ is the $\alpha$-quantile of the standard normal. 
\end{algorithmic}
\label{algo:test}

\end{algorithm*}
 
\subsection{A lower bound on unobserved confounding strength}
\label{sec:lower_bound}
The statistical test described in the previous section raises a question about what level of confounding strength is reasonable to test. 
Ideally, epidemiologists would like to estimate the confounding strength instead of conducting a test. However, this is infeasible unless the support of $\prct_X$ and $\pobs_X$ are the same. 

A practical alternative is to estimate a lower bound on the true unobserved confounding strength defined as
$$
\Gammatrue \defeq  \inf \{ \confvalue : \pfullobs \in \msm(\pobs,\confvalue)\}.
$$
Given an observational study and a randomized trial, we aim to find a quantity that, with high probability, is a lower bound for the true confounding strength $\Gammatrue$. 
Without loss of generality, we fix the test $\test_{\rct}$ and recall that $\test_{\rct}(\confvalue, \alpha) $ is a deterministic function given the data\footnote{When bootstrap is used to estimate the variance we fix the bootstrap bags for all $\confvalue$.}. Hence, we obtain a lower bound for a fixed significance level $\alpha$ by computing
 \begin{equation}
 \label{eq:lb1}
     \gammalb = \inf_{\Gamma} \{\Gamma: \test_{\rct}(\confvalue,\alpha) =0 \},
 \end{equation}
that is, in words, the smallest $\Gamma$ such that the test accepts the null hypothesis. In practice, we compute $\gammalb$ with a grid search over values of $\confvalue$ starting from $1$ until the first test acceptance.

We show in the following proposition that $\gammalb$ is a valid lower bound for $\Gammatrue$. 
\begin{prop}
Let $\gammalb$ be as in Equation~\eqref{eq:lb1} for a fixed level $\alpha$.
Then, under the setting described in~\Cref{sec:setting}, and the same assumptions as in Proposition~\ref{prop:test},  
$\gammalb$ is an asymptotically valid lower bound, 
$$\p (\gammalb \leq \Gammatrue) \geq 1-\alpha - o(1).$$
\end{prop}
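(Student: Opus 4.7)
The plan is to reduce the lower-bound guarantee to the validity of the test $\test_\rct$ established in the previous proposition, applied at the single value $\Gamma = \Gammatrue$.

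First, I would check that the null $\hnull(\Gammatrue)$ is actually true under $\pfullobs$. This requires that the infimum defining $\Gammatrue$ is attained, i.e. $\pfullobs \in \msm(\pobs,\Gammatrue)$. The marginal sensitivity model is nested in $\Gamma$: if $\Gamma_1 \leq \Gamma_2$ then $\msm(\Gamma_1) \subseteq \msm(\Gamma_2)$, since the odds-ratio constraint only loosens. Pick any sequence $\Gamma_k \downarrow \Gammatrue$ with $\pfullobs \in \msm(\pobs,\Gamma_k)$. The constraint $\Gamma_k^{-1} \leq \text{OR}(X,U) \leq \Gamma_k$ holds almost surely for each $k$; passing to the limit and using continuity of the bounds in $\Gamma$ gives $\Gammatrue^{-1} \leq \text{OR}(X,U) \leq \Gammatrue$ almost surely, hence $\pfullobs \in \msm(\pobs,\Gammatrue)$.

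Next, I would use the monotone relationship between the test outcome and $\gammalb$. By the definition in Equation~\eqref{eq:lb1}, $\gammalb = \inf\{\Gamma : \test_\rct(\Gamma,\alpha) = 0\}$, so the event
\begin{equation*}
\{\test_\rct(\Gammatrue,\alpha) = 0\} \subseteq \{\gammalb \leq \Gammatrue\}.
\end{equation*}
Since $\hnull(\Gammatrue)$ is true (by the previous step) and the assumptions of~\Cref{prop:test} are in force, the validity statement gives
\begin{equation*}
\p\bigl(\test_\rct(\Gammatrue,\alpha) = 1\bigr) \leq \alpha + o(1),
\end{equation*}
and combining the two displays yields $\p(\gammalb \leq \Gammatrue) \geq 1 - \alpha - o(1)$, which is the claim. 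The same argument goes through verbatim for $\test_{\obsres}$.

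The main subtlety is the first step: arguing that the infimum in the definition of $\Gammatrue$ is attained, so that $\hnull(\Gammatrue)$ itself, rather than only $\hnull(\Gamma)$ for $\Gamma > \Gammatrue$, can serve as the null to which the validity proposition is applied. Once closedness of $\msm(\pobs,\cdot)$ under decreasing sequences of $\Gamma$ is established, the rest is an immediate one-line application of~\Cref{prop:test} combined with the trivial set inclusion above; no further estimation-theoretic work is needed.
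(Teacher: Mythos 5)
Your proof is correct and follows essentially the same route as the paper: both reduce the claim to the asymptotic validity of the test at $\Gamma = \Gammatrue$, via the observation that acceptance at $\Gammatrue$ forces $\gammalb \leq \Gammatrue$. The only difference is that you explicitly verify that the infimum defining $\Gammatrue$ is attained (so that $\hnull(\Gammatrue)$ is genuinely a true null to which the validity proposition applies), a point the paper leaves implicit.
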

\begin{proof}
Note that by definition of $\gammalb$, we have that
\begin{align*}
    \p (\gammalb > \Gammatrue) &= \p (\cap_{\Gamma \leq \Gammatrue} \{\test_{\rct}(\confvalue, \alpha) =1\}) \\
    &\leq \p(\test_{\rct} (\Gammatrue,\alpha) =1) \leq \alpha + o(1),
\end{align*}
where the last inequality follows from the asymptotic validity of the test in Proposition~\ref{prop:test}. 
\end{proof}

\section{Synthetic Experiments}
\label{sec:syn_exp}
In this section, we evaluate our two tests and the resulting lower bounds in finite-sample synthetic and semi-synthetic experiments. In particular, we fix the true unobserved confounding strength $\Gammatrue$ and conduct experiments varying the sample size and the invariant distribution $\pinv$. 

First, we postulate that, for a fixed $\Gammatrue$,
the tightness of 
the lower bound $\gammalb$ improves when the confounder $U$ is more informative about the potential outcomes $(Y(1),Y(0))$. In our experiments, we choose the correlation between 
the unobserved confounder and one of the potential outcomes as a proxy measure of information,
\begin{equation}  
\label{eq:correlation}
\rho_{u,y} = \frac{\operatorname{Cov}_{\pfullobs}[Y(1),U]}{\sigma_{Y(1)} \sigma_U}.
\end{equation}
Intuitively, the sensitivity analysis bounds are tight for a specific $\Gammatrue$ when $\pcnfos$ leads to a marginal distribution $\pobs$ that maximally biases the estimable ATE. This situation occurs, for instance, when patients experiencing smaller outcomes are assigned to the control group while those with larger outcomes are in the treatment group. In this case, the sensitivity analysis bounds must be sufficiently large to include the true ATE and remain valid. 
Such a scenario is only possible if $U$ is very informative of $Y(1)$, captured by a high correlation coefficient. 
Conversely, when the confounder carries little information about the potential outcomes, the sensitivity bounds are unnecessarily conservative, leading to low power of the test and hence looser $\gammalb$. More formally, \Cref{apx:limitations_msm} discusses two extreme cases. If the unobserved confounder is maximally informative about the potential outcomes, namely \(U=(Y(1),Y(0))\), then \(\gammalb\) converges to \(\Gammatrue\) in the infinite-sample limit. Conversely, if \(U \indep (Y(1),Y(0))\), then the confounder carries no information about the potential outcomes and \(\gammalb \) converges to $1$.

Second,  we study the behavior of the lower bound as the observational study sample size grows. In real-world situations, increasing the number of samples in a randomized trial is often constrained by the logistical challenges of conducting additional experiments. However, observational studies have the potential for continuous growth through electronic health records and insurance claims databases. In the context of postmarketing, ongoing monitoring enables the inclusion of data from newly exposed individuals. Therefore, we compare our two tests when the sample size of the observational study grows: our experiments show that $\test_{\obsres}$ has better statistical power when $\nobs$ is large.
\subsection{Datasets}
\label{sec:datasets}
\paragraph{Synthetic distribution} We first benchmark tests and respective lower bounds with a synthetic distribution similar to \citep{yadlowsky2022bounds,jin2023sensitivity}. Here, the propensity score, the true unobserved confounding strength $\Gammatrue$, and the correlation strength can be designed.

We choose the invariant $\pinv$ to be the following linear outcome model 
$$
Y(T)= (2 T-1) X+(2 T-1) + U +\epsilon, \quad \epsilon \sim \gauss(0,\sigma^2_Y).
$$
For the marginal distribution over $X,U$ in $\pconf^{\diamond}$ we generate an unobserved confounder $U \sim \operatorname{Unif}(0,1)$ for both study designs and draw the observed covariate according to
$$ \prct_X = \operatorname{Unif}(-1,1), \; \pobs_X =  \operatorname{Unif}(-2,2).$$
Further, for the observational distribution, we choose the conditional distribution of the treatment $T$ given $X,U$ to be a  Bernoulli, which satisfies the marginal sensitivity model with an odds ratio equal to $\Gammatrue$.
Specifically, we fix the marginal propensity score as $$\pconfobs(T=1 \mid X)=\logit\left(0.75X + 0.5\right),$$ and design the full propensity score  $\pconfobs(T=1 \mid X, U)$ such that it marginalizes to $\pconfobs(T=1 \mid X)$.
For the randomized control trial, we choose 
$\pi = \pconfrct (T=1| X,U) = 1/2$.
We refer the reader to~\Cref{apx:syn_exp} for complete experimental details.

\paragraph{Semi-synthetic datasets} We expand our benchmark using three real-world randomized trials: Hillstrom's MineThatData Email data~\citep{hillstrom2008}, the Tennessee STAR study~\citep{word1990state} and the VOTE dataset~\citep{gerber2008social}.
In contrast to the synthetic experiments, these datasets involve real outcome functions, though the treatment assignment is still controlled. 

We focus on Hillstrom's dataset for clarity of presentation, and we refer the reader to~\Cref{apx:additional_experiments} for experiments on the other datasets showing similar trends. \citet{hillstrom2008} focused on measuring the impact of an email campaign on the dollars spent by the recipients in the following two weeks. We first sample a small subset of the original trial, $D$, as our randomized trial, $\datarct$. We can then subsample multiple observational studies from $D \setminus \datarct$ sharing a fixed true confounding strength $\Gammatrue$, i.e. $\pconf$, but with a varying correlation between the hidden confounder $U$ and outcome $Y(1)$, i.e. $\pinv$. 

Let us denote $\Xall$ as the vector of all observed covariates. While we cannot intervene on $\pinv (Y(1), Y(0)|\Xall)$ as it is intrinsic to the dataset, we can generate multiple observational studies by partitioning $\Xall$ into unobserved $U$ and observed $X$ in different ways.
For a given partitioning $\Xall = (U,X)$, the resulting $\dataobs$ will have a specific  $\pinv (Y(1), Y(0)|U)$  and hence correlation coefficient $\rho_{u,y}$. With each choice of $U$, we enforce a propensity score $\pconfobs(T=1 \mid U)$ that satisfies $\msm(\pobs, \Gammatrue)$
by subsampling $D \setminus \datarct$. Finally, we remove $U$ to construct $\dataobs$. Our subsampling approach is a variation of the methods presented in~\citet{keith2023rct, gentzel2021and} (see further details in Appendix \ref{apx:semisyn_exp}). Finally, we enforce \Cref{asm:nested_support} by excluding urban zip codes from the support of the randomized trial.

\begin{figure*}
\vspace{-9mm}
    \centering
\includegraphics[scale=0.25]{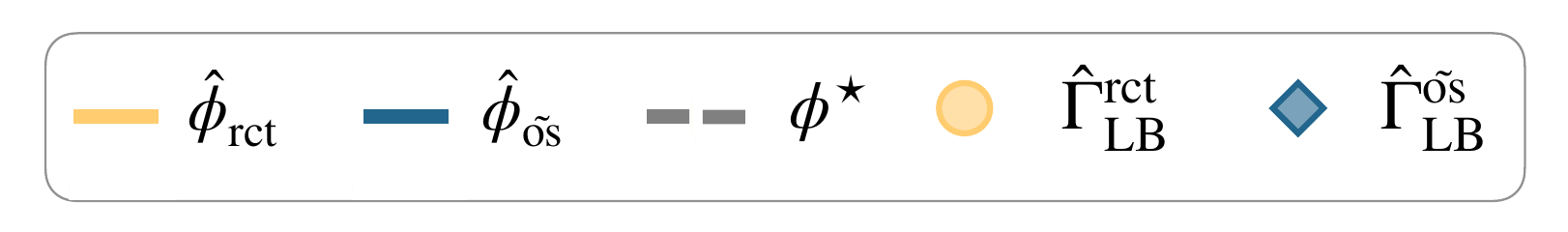}
      \\
    \begin{subfigure}[b]{0.24\textwidth}
        \centering
      \includegraphics[width=\textwidth]{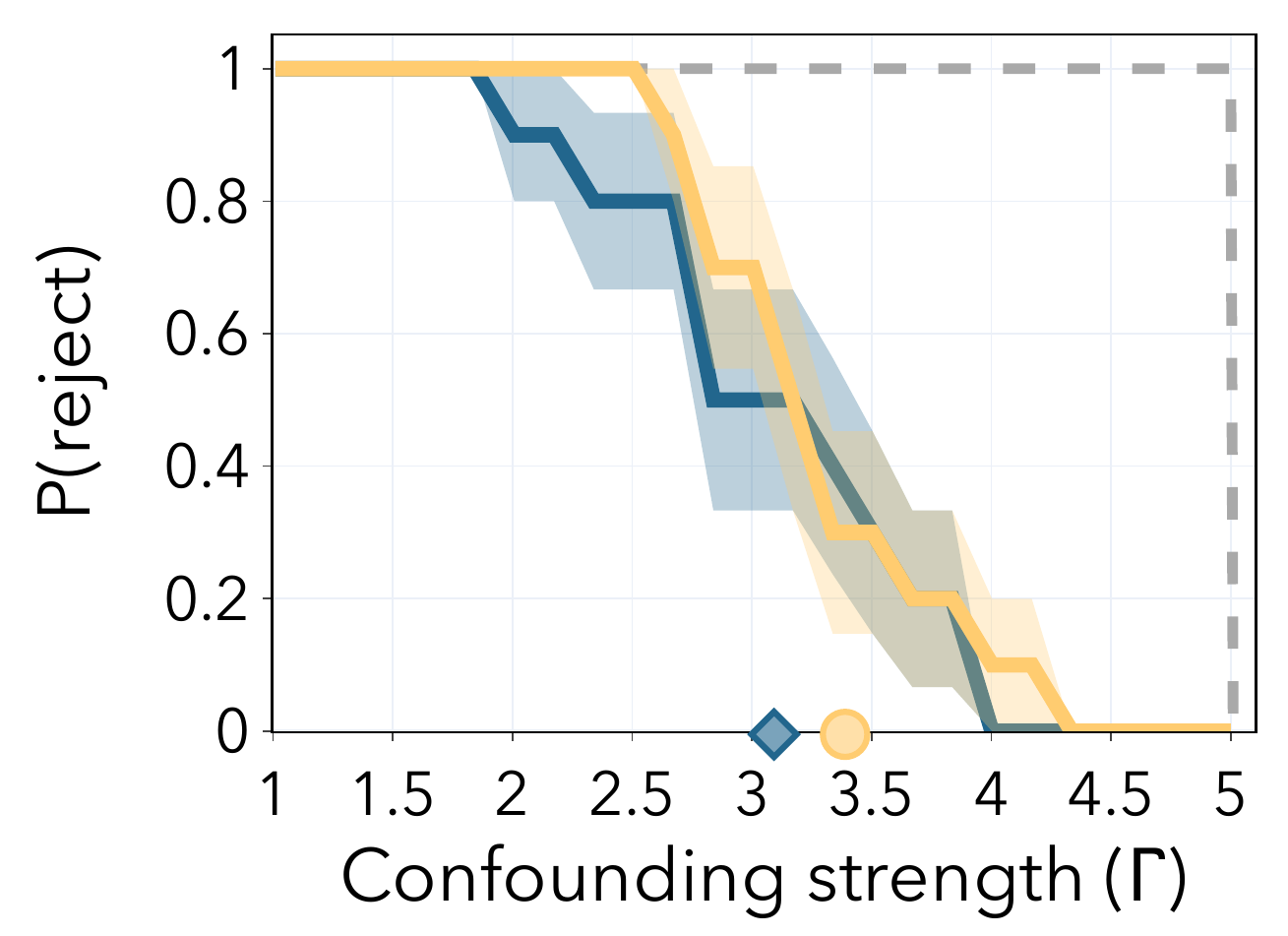}
        \caption{\footnotesize{Small sample (synthetic)}}
   \label{fig:synthetic_small_sample}
    \end{subfigure}
    \hfill
    \begin{subfigure}[b]{0.24\textwidth}
        \centering
\includegraphics[width=\textwidth]{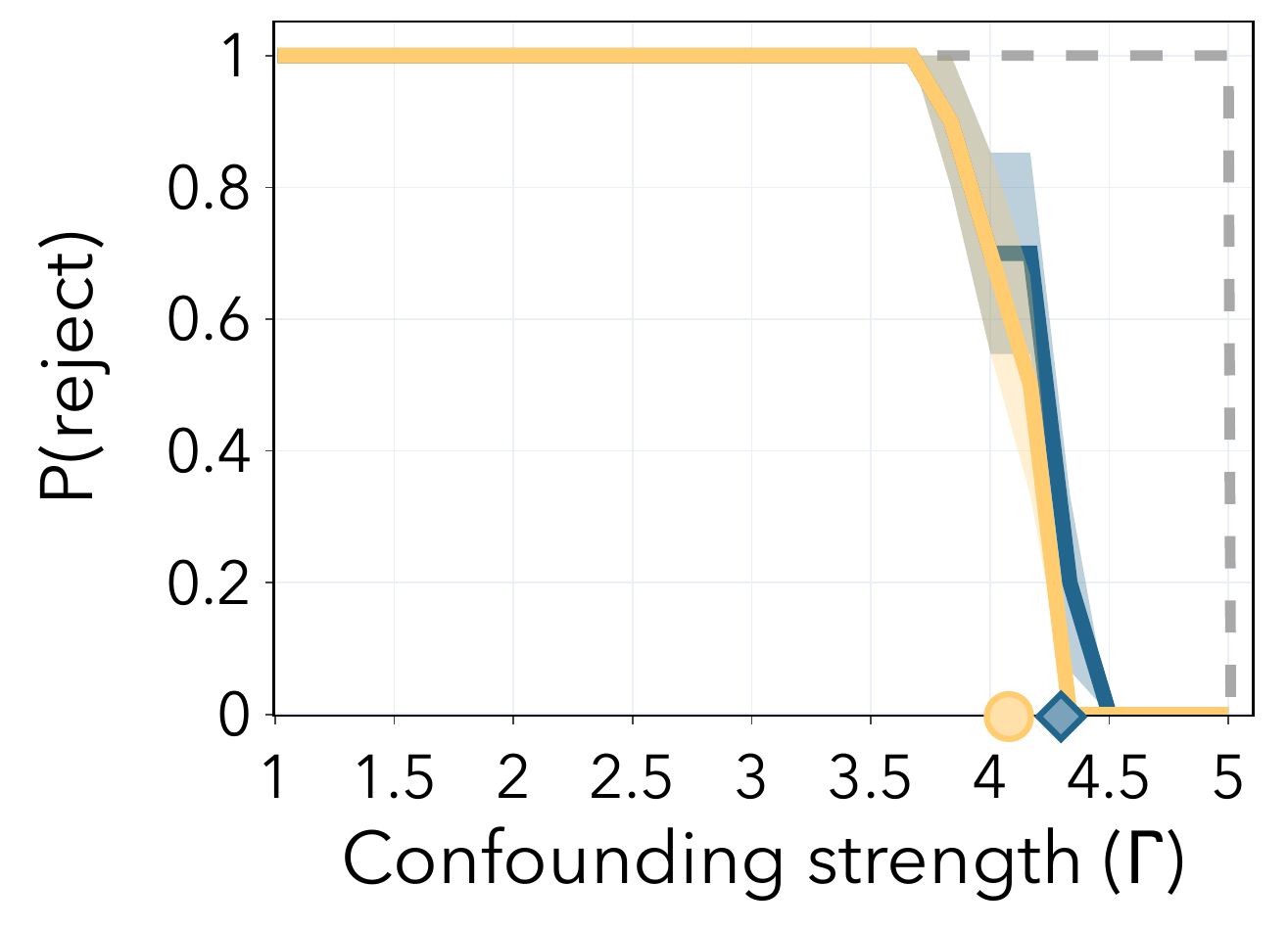}
        \caption{\footnotesize{Large sample (synthetic)}}
    \label{fig:synthetic_large_sample}
    \end{subfigure}
    \hfill
    \begin{subfigure}[b]{0.24\textwidth}
        \centering
   \includegraphics[width=\textwidth]{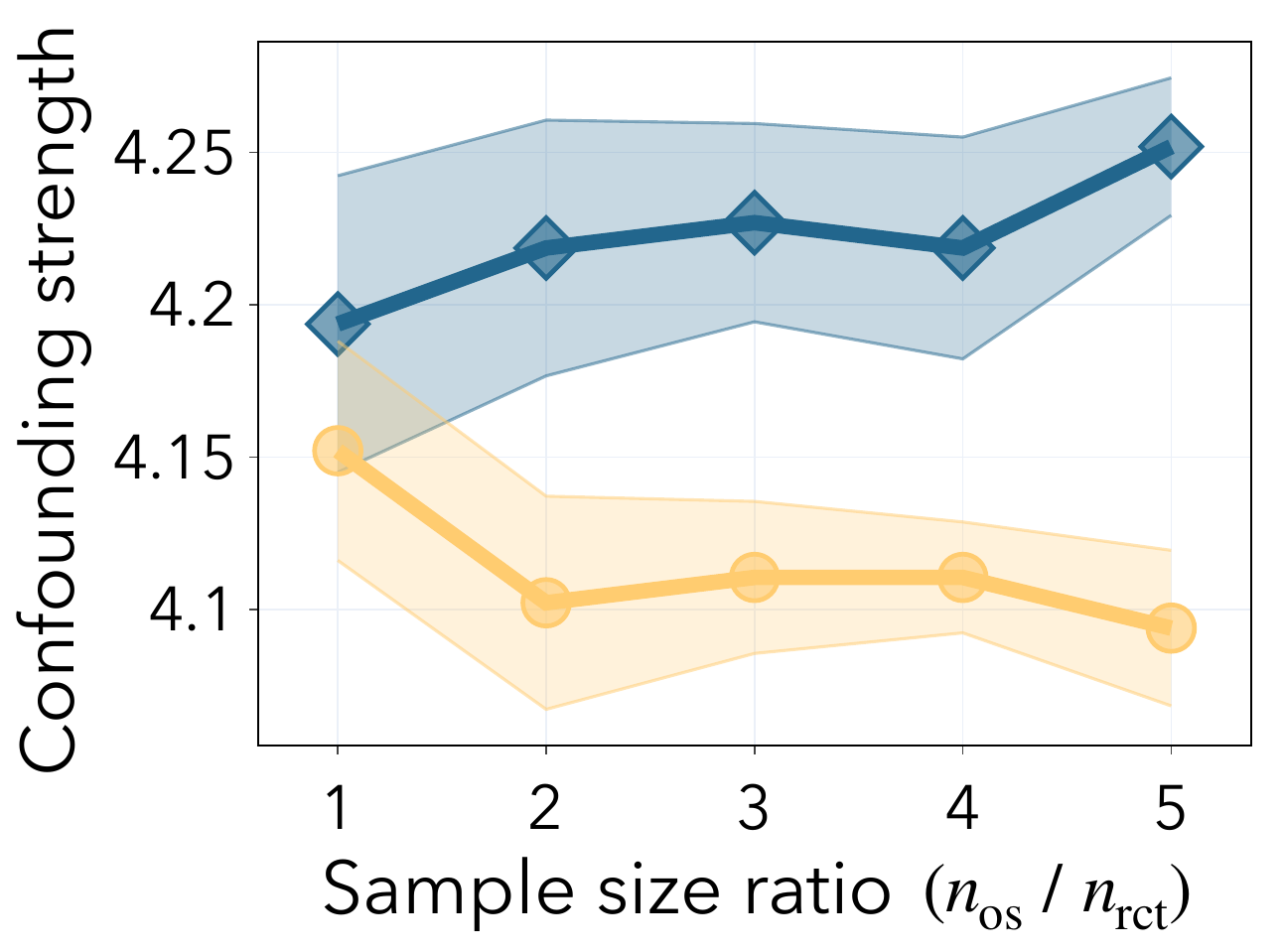}
        \caption{\footnotesize{Increasing $\nobs$ (synthetic)}}
        \label{fig:synthetic_obs_sample}
    \end{subfigure}
    \hfill
    \begin{subfigure}[b]{0.24\textwidth}
        \centering
    \includegraphics[width=\textwidth]{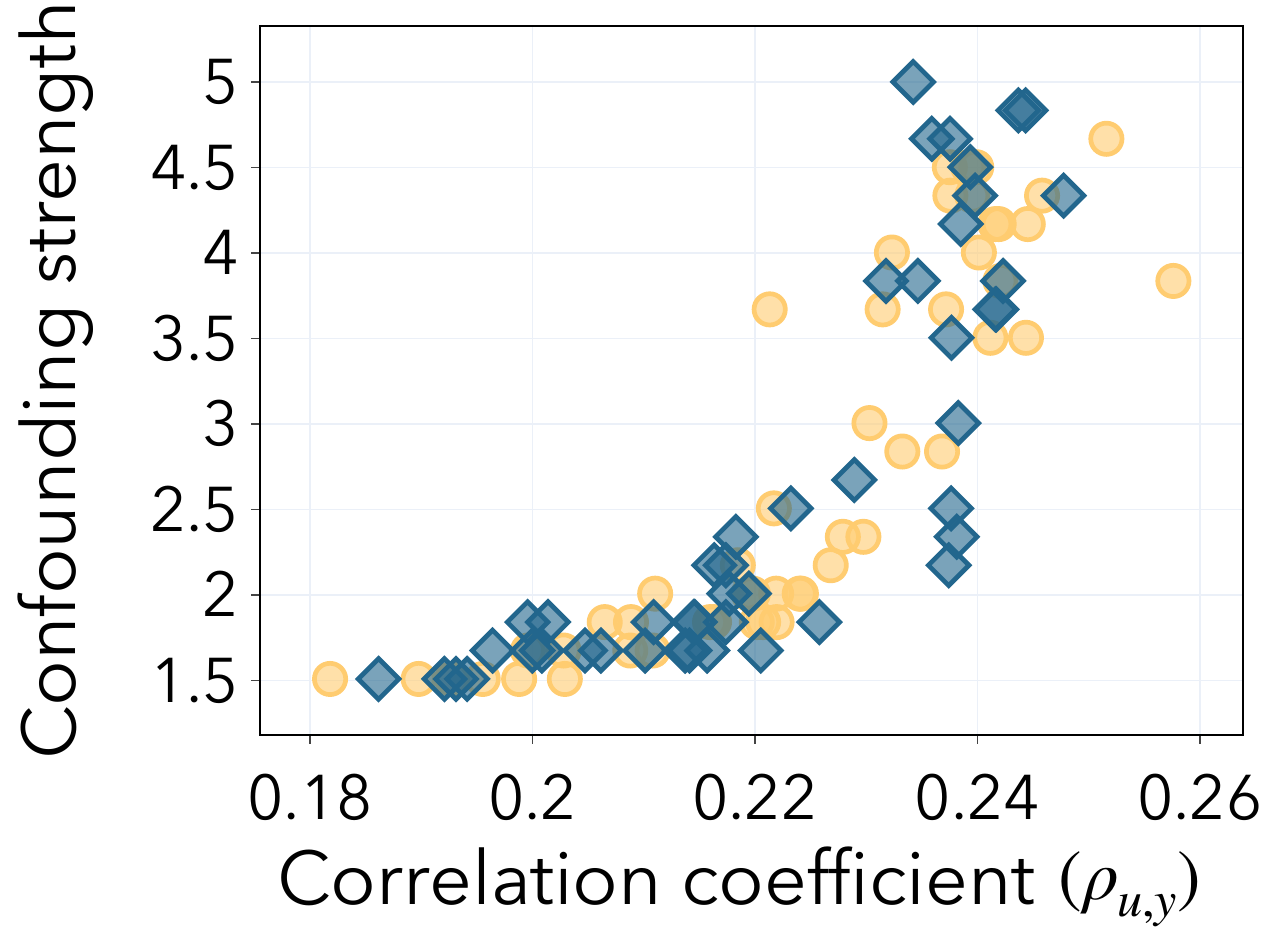}
        \caption{\footnotesize{$\rho_{u,y}$ effect (synthetic)} }
     \label{fig:synthetic_correlation}

    \end{subfigure}\\
    \begin{subfigure}[b]{0.24\textwidth}
        \centering
    \includegraphics[width=\textwidth]{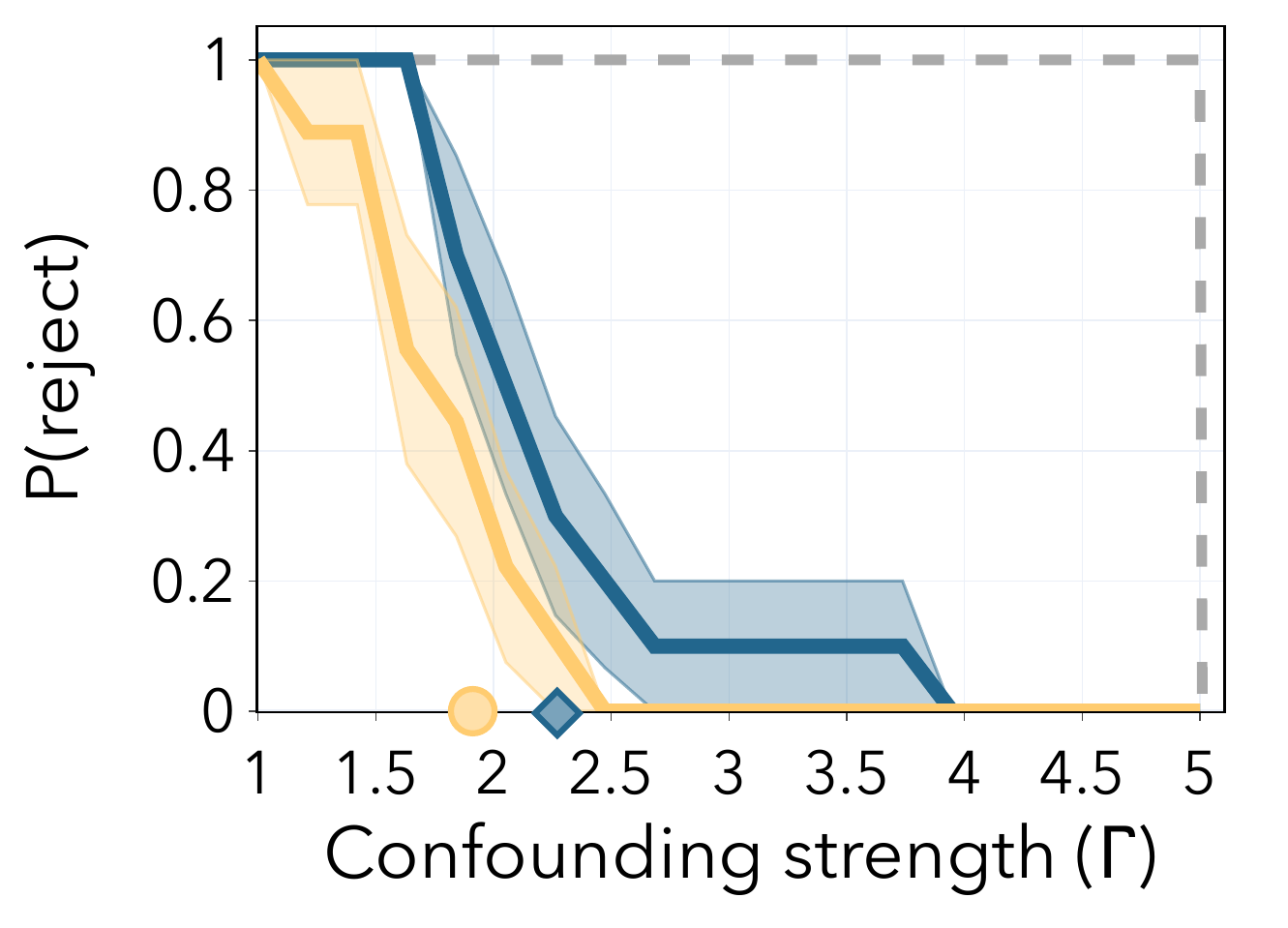}
        \caption{\footnotesize{Small sample (Hillstrom)} }
        \label{fig:hillstrom_small}
    \end{subfigure}
    \hfill
    \begin{subfigure}[b]{0.24\textwidth}
        \centering
\includegraphics[width=\textwidth]{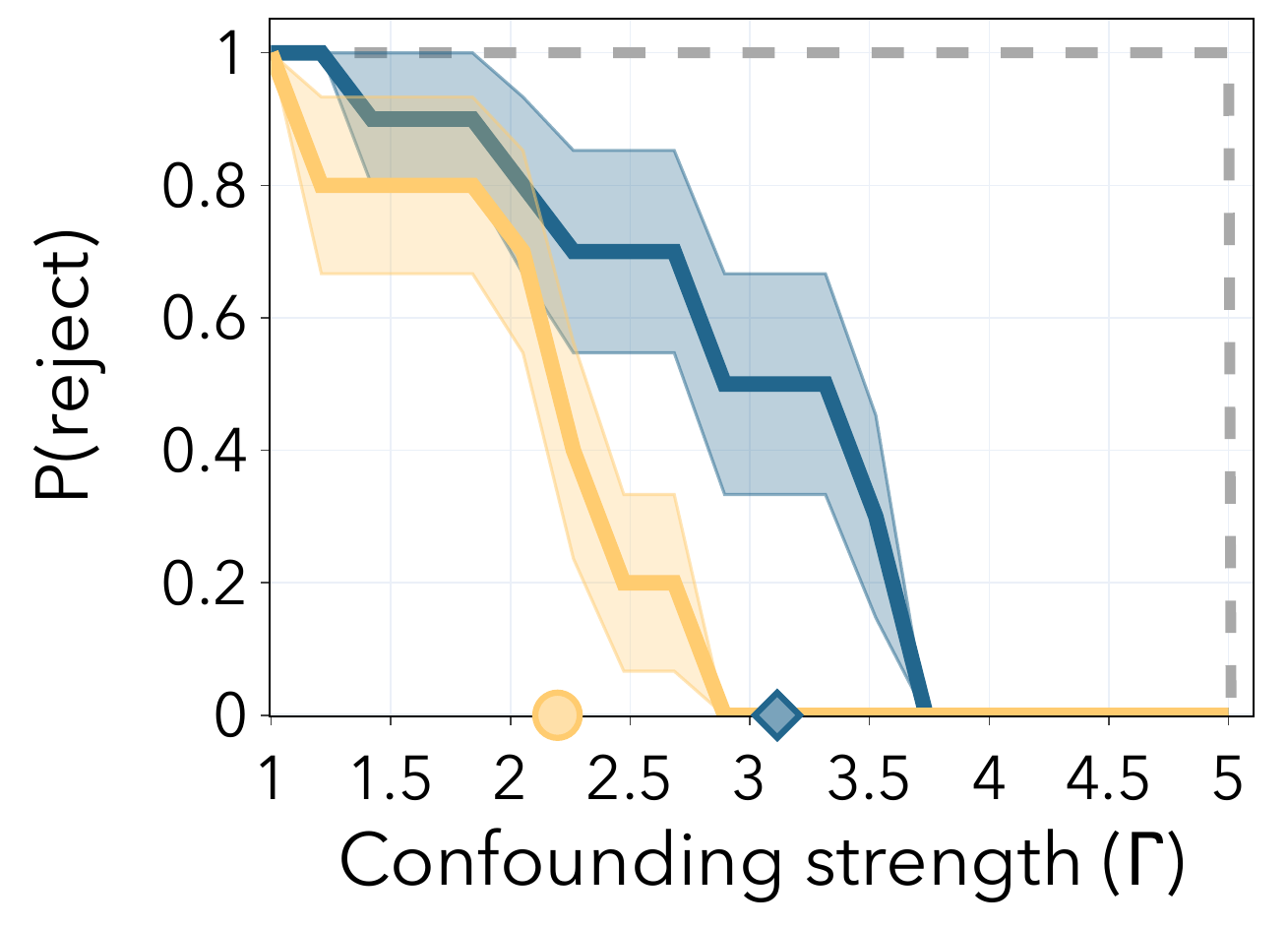}
        \caption{\footnotesize{Large sample (Hillstrom)} }
        \label{fig:hillstrom_large}
    \end{subfigure}
    \hfill
    \begin{subfigure}[b]{0.24\textwidth}
        \centering
   \includegraphics[width=\textwidth]{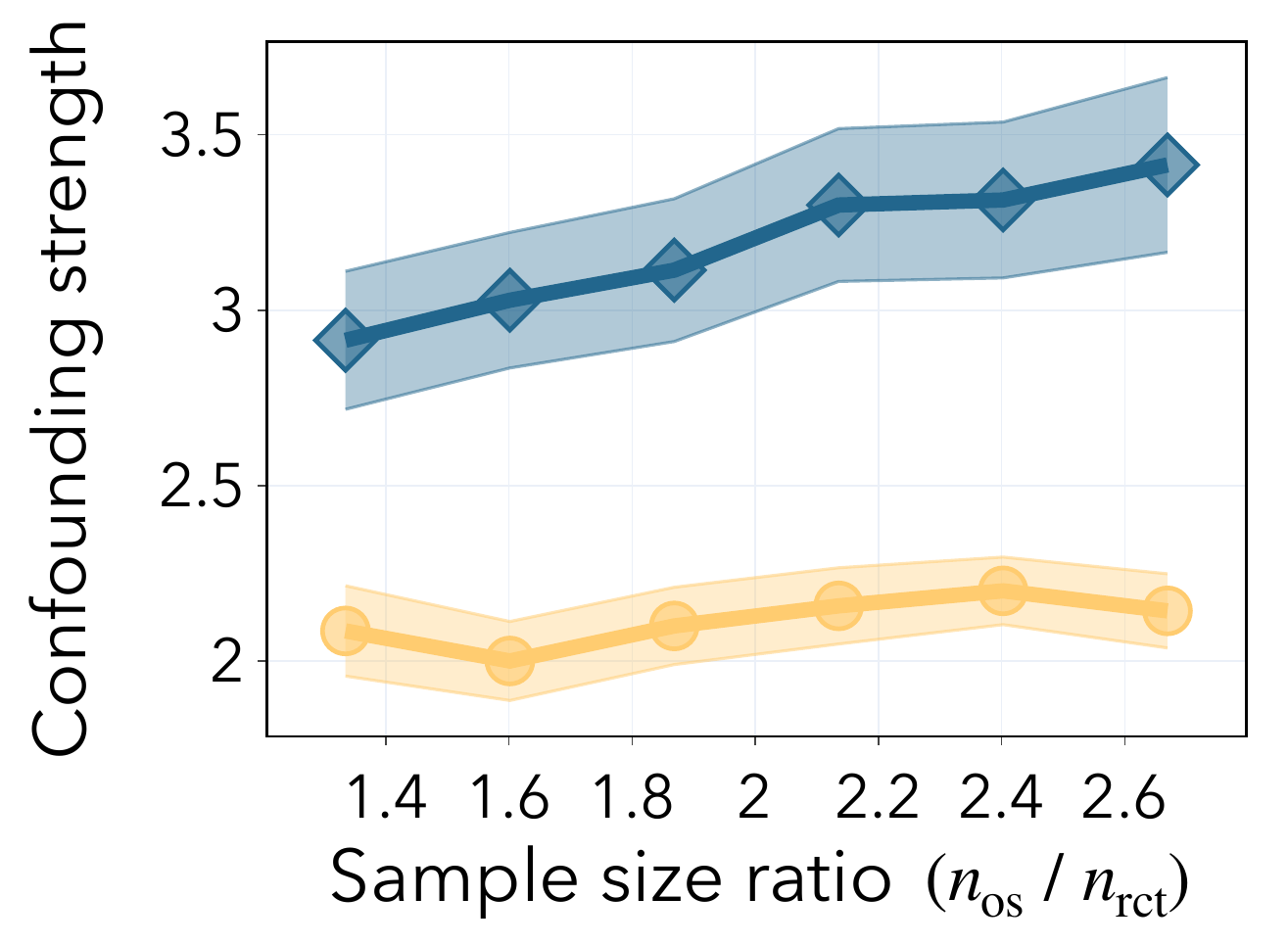}
        \caption{\footnotesize{Increasing $\nobs$} (Hillstrom)}
        \label{hillstrom:nobs}
    \end{subfigure}
    \hfill
    \begin{subfigure}[b]{0.245\textwidth}
        \centering
    \includegraphics[width=\textwidth]{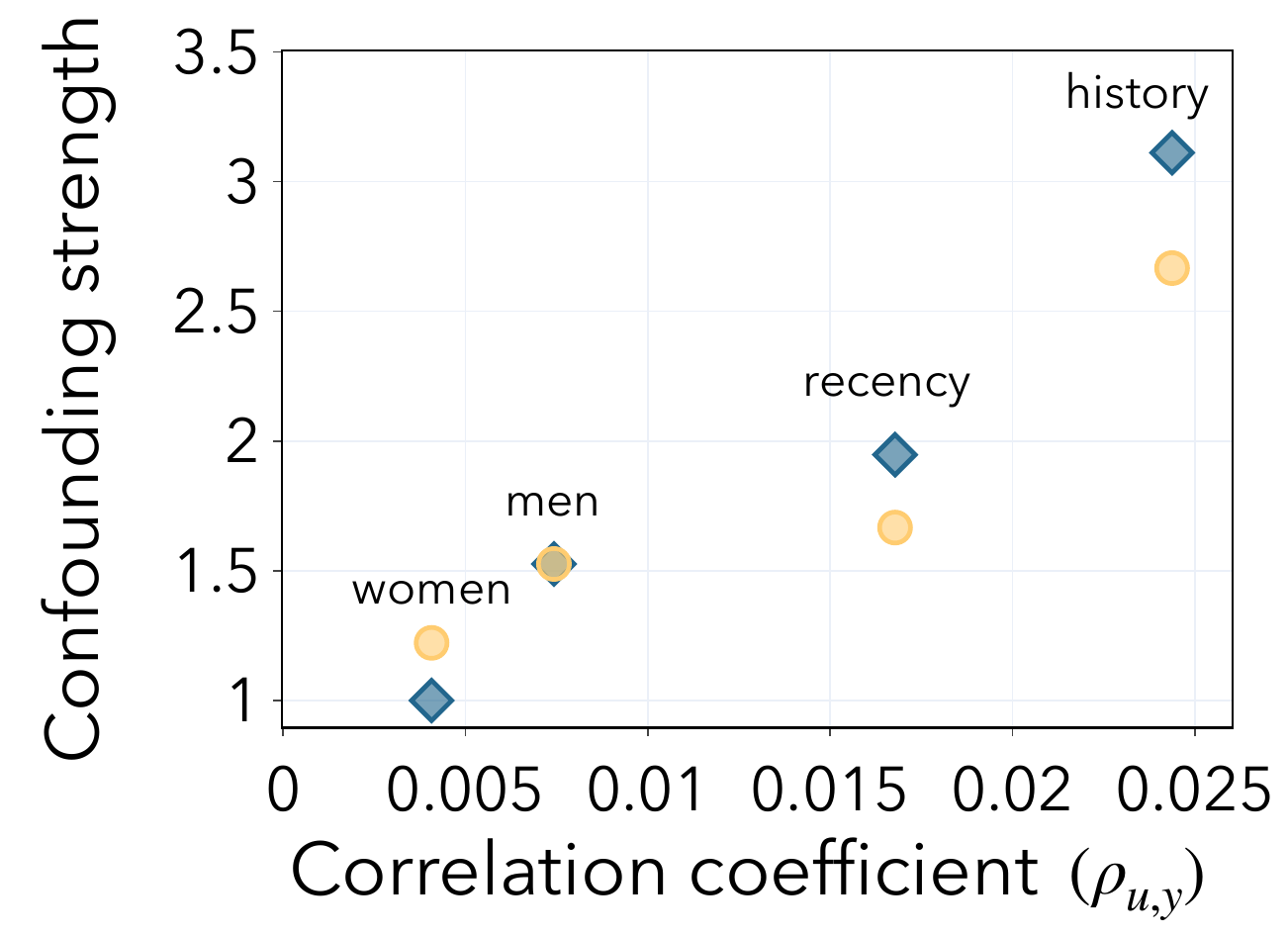}
        \caption{\footnotesize{$\rho_{u,y}$ effect (Hillstrom)}}
        \label{hillstrom:corr}

    \end{subfigure}
    
    \caption{\small{For all the plots: the significance level is $\alpha=0.05$, $\phi^\star$ denotes the oracle test which rejects for $\confvalue<
    \Gammatrue$, $\gammalb^{\rct}$ and $\gammalb^{\obsres}$ denote which test is used to compute $\gammalb$. First row with \emph{synthetic experiment} choosing  $\Gammatrue=5$: Probability of rejection for different $\confvalue$ and average $\gammalb$ for the test for (a) small sample size: $\nrct=2K,\nobs=2K$ and (b) large sample size: $\nrct=20K,\nobs=20K$. $\gammalb$ for (c) increasing sample size of the observational study with $\nrct=20K$ and (d) increasing correlation coefficient; $\nrct=20K,\nobs=20K$. Second row with the \emph{semi-synthetic} Hillstrom dataset choosing $\Gammatrue=5$ and using ``history" as unobserved confounder (except in (h)): Probability of rejection for different $\confvalue$ and average $\gammalb$ for (e) small sample size: $\nrct=2300$, $\nobs=6150$ and (f) large sample size: $\nrct=7680$, $\nobs=20500$. $\gammalb$ for (g) increasing $\nobs$ with $\nrct=7680$ and (h) increasing correlation coefficient.}
 }
    \label{fig:ate_synthetic}
\end{figure*}

\subsection{Experimental results}
 \label{sec:results}
 We now discuss our experimental results depicted in
~\Cref{fig:ate_synthetic}. The top row presents results for the synthetic experiments, and the bottom row for the semi-synthetic experiments.

\paragraph{Effect of observational study sample size} 

First, we observe in Figures \ref{fig:synthetic_small_sample}-\ref{fig:synthetic_large_sample} and Figures \ref{fig:hillstrom_small}-\ref{fig:hillstrom_large} that our tests are valid in all settings, i.e. they do not reject for strengths larger than $\Gammatrue$. 
However, the statistical power substantially improves in the large sample size regime.  
 In general, the performance of both tests aligns.
In Figures \ref{fig:synthetic_obs_sample} and \ref{hillstrom:nobs}, the lower bounds $\gammalb^{\rct}$ and $\gammalb^{\obsres}$ vary
with the sample size of the observational study. We confirm that the $\test_{\obsres}$  derives greater benefits from a larger observational study sample size than $\hat{\phi}_{\mathrm{rct}}$, as discussed in~\Cref{sec:test_presentetion}. 

\paragraph{Effect of outcome-confounder correlation} 
Note that the tests in ~\Cref{fig:synthetic_small_sample}-\ref{fig:synthetic_large_sample} and ~\Cref{fig:hillstrom_small}-\ref{fig:hillstrom_large} are somewhat conservative: The probability of rejection for $\confvalue$ close to $\Gammatrue$ is small, which leads to a rather loose lower bound estimate $\gammalb$. This is due to a fundamental limitation of the marginal sensitivity model that cannot be overcome without additional assumptions on how U affects Y, as discussed in~\Cref{apx:limitations_msm}. We study here the effect of increasing the outcome-confounder correlation  (Equation~\ref{eq:correlation}).
Specifically, we generate observational datasets with a constant $\Gammatrue$ but varying $\rho_{u,y}$, and report $\gammalb$ for both tests. For the synthetic experiments in~\Cref{fig:synthetic_correlation}, we plot $\gammalb^{\rct}$ and $\gammalb^{\obsres}$ for $n=50$ distinct values of $\sigma^2_{Y(1)} \sim \operatorname{Unif}[0,1]$. For the semi-synthetic experiments in~\Cref{hillstrom:corr}, we depict $\gammalb^{\rct}$ and $\gammalb^{\obsres}$ for different hidden confounders $U$. Both plots confirm our hypothesis that higher $\rho_{u,y}$ correlates with a tighter lower bound $\gammalb$.

\section{Real-world experiments}
\label{sec:whi_exp}

Linking back to the pipeline in \Cref{fig:motivating_example}, we demonstrate how epidemiologists can use the lower bound $\gammalb$ to successfully differentiate between studies with significant confounding and those with negligible confounding. 
Specifically, we propose comparing $\gammalb$ with a critical value of $\confvalue$, estimated from the available observational data
$$
\confvaluecrit \defeq \inf\{ \confvalue : 0 \in [\yonehat^-_{\confvalue}, \yonehat^+_{\confvalue}] \}. 
$$
In essence, $\confvaluecrit$ represents the minimum strength for which sensitivity analysis includes both positive and negative values of treatment effect, thereby invalidating the study results. Similar critical values have been proposed in the literature to assess the robustness of conclusions drawn from observational data, see e.g. \citet{vanderweele2017sensitivity,jin2023sensitivity}. The most appropriate choice for the specific context should be determined by epidemiologists. 

We flag an observational study as confounded if $\gammalb$
exceeds the critical value, i.e.
\begin{equation}
\label{test_critical}
\algours \defeq \indi \{ \gammalb > \confvaluecrit\}.
\end{equation}

We compare our decision-making procedure with one based on a binary test 
$$
\algbinary = \indi \{ \gammalb >1\}.
$$
In contrast to our procedure, the output of the binary one flags an observational study if any level of confounding is detected.  Note that choosing a more powerful binary test in the literature would only \emph{exacerbate} this issue. 

\paragraph{Controversy around HRT} For years, epidemiologists could not reach a consensus on the impact of hormone replacement therapy (HRT) on coronary heart disease and stroke
based on the findings of the Women's Health Initiative (WHI) study~\citep{anderson2003implementation}.
The WHI study included a randomized trial and an observational study that examined the impact of HRT on various cardiovascular events. While the observational study suggested that HRT had a protective effect against these outcomes, the randomized trial indicated the opposite. This discrepancy was recently resolved by identifying a strong unobserved confounder - the time $t$ since the start of HRT - and reanalyzing the data accordingly~\citep{vandenbroucke2009hrt}. We now present evidence that our procedure can yield the same epidemiological conclusions and avoid issuing false alarms when the confounding is negligible. 

\paragraph{Experimental details} We consider two binary-valued outcomes: the presence of stroke and coronary heart disease
within the follow-up period. We apply our procedure from \Cref{test_critical} to both the original dataset, which includes all patients (i.e. $t \leq 20$), and a subsampled dataset that only includes patients who were not previous users of HRT (i.e. $t = 0$). Since the WHI study satisfies the criteria for a nested trial design, we calculate $\gammalb$ using our testing procedure $\test_{\obsres}$.
See~\Cref{apx:whi_exp} for experimental details.

\begin{table}
\centering
\caption{The significance level is $\alpha=0.05$. For $t=0$ (small confounding), the study only included patients who were not previous users of HRT. For $t \leq 20$ (strong confounding), the study includes patients who have been using HRT for up to 20 years.}
\label{table:whi}
\begin{tabular}{cccccccc}
\toprule
\multirow{2}{*}{Metric} & \multicolumn{2}{c}{Stroke} & \multicolumn{2}{c}{Coronary heart disease} \\
\cmidrule(lr){2-3} \cmidrule(lr){4-5}
& $t=0$ & $t\leq20$ & $t=0$ & $t \leq 20$ \\
\midrule
$\confvaluecrit$ & 1.017 & 1.172 & 1.017 & 1.164 \\
$\gammalb$  & 1.052 & 1.207 & 1.009 & 1.224 \\
\midrule
$\algbinary$  & \textcolor{ForestGreen}{1} & \textcolor{ForestGreen}{1} & \textcolor{BrickRed}{1} & \textcolor{ForestGreen}{1} \\
$\algours$  & \textcolor{ForestGreen}{1} & \textcolor{ForestGreen}{1} & \textcolor{ForestGreen}{0} & \textcolor{ForestGreen}{1} \\
\bottomrule
\end{tabular}
\end{table}

\paragraph{Results} In~\Cref{table:whi}, we show the result of both procedures on the WHI dataset, with small ($t=0$) and large ($t \leq 20$) unobserved confounding .

For coronary heart disease, both algorithms flag the study as confounded when strong unobserved confounding is present ($t \leq 20$). However, when minimal unobserved confounding is present ($t = 0$), our test does not flag the study, while $\algbinary$ does. This difference underscores our test's capability to distinguish between small and large unobserved confounding, thereby addressing a limitation in the flagging procedures based on existing testing methods.

In the case of stroke, both $\algours$ and $\algbinary$ correctly flag the observational study, even when we adjust for the time since the start of treatment ($t=0$). This finding aligns with experts suggesting that additional unobserved confounding factors for stroke are still present after controlling for the time since the start of hormone replacement therapy~\citep{prentice2005combined}.

Observe that an alternative way to reach the same conclusions is by testing the difference in ATE estimates between the two studies. However, our approach offers a notable advantage: it allows us to test if the observational study is too confounded on arbitrarily fine-grained subgroups up to the individual level. Indeed, we can estimate the CATE sensitivity analysis bounds and compare critical values for specific subgroups against our lower bound. In contrast, testing differences in group-level ATE estimates would require several tests, one for each subgroup, leading to issues with multiple testing and insufficient sample sizes.

\section{Discussion and future work}
\label{sec:limitations}
Our approach shares limitations with other methods that test for unobserved confounding. Since we rely on the transportability assumption, our test could misidentify violations of this assumption as unobserved confounding. 
In addition, the lower bound we provide is optimistic; outside the common support of the two studies, the unobserved confounding could be arbitrarily high. Furthermore, our test is designed to detect confounding structures that bias the average treatment effect and, hence, would not detect confounding bias that cancels out on average.

Our discussion suggests several important directions for future research. 
First, 
developing a more refined sensitivity model that accounts for the correlation between outcomes and unobserved confounders could result in a more powerful test. Second, our test could be adapted to the scenario where multiple observational datasets may be available but no randomized control trials. Lastly, it would be highly valuable to propose a procedure that not only identifies hidden confounding but also suggests specific interventions to mitigate it. 

\subsection*{Acknowledgements}
PDB was supported by 
the Hasler Foundation grant number 21050. JA was supported by the ETH AI Center. KD was supported by the ETH AI Center and the ETH Foundations of Data Science.

\bibliography{main}
\bibliographystyle{plainnat}

\clearpage

\section*{Appendices}
The following appendices provide deferred proofs, experiment details, and ablation studies.
\appendix
\DoToC
\clearpage
\hypersetup{
    colorlinks,
   linkcolor={pierLink},
    citecolor={pierCite},
    urlcolor={pierCite}
}
\section{Methodology}
\subsection{Remaining proofs}
We present here the proofs 
for~\Cref{lemma:identify_ate,prop:test}.
\subsubsection{Proof of~\Cref{lemma:identify_ate}}
\label{apx:proof_id_ate}
For $\diamond=\rct$, we have
\begin{align*}
	\mu(\pxrct, \pfullrct) &= \EE_{\pfullrct} [ Y(1)-Y(0)] \\
	&= \EE_{\pxrct} \left[ \EE_{\prct}[Y \mid T=1,X]\frac{\prct(T=1)}{\prct(T=1)}  -\EE_{\prct}[Y \mid T=0,X]\frac{\prct(T=0)}{\prct(T=0)}  \right]\\
	&= \EE_{\prct} \left[ Y\left(\frac{T}{\pi} - \frac{(1-T)}{1-\pi}  \right)  \right],
\end{align*}
where the last equality follows from the internal validity of the randomized trial.

For $\diamond=\obsres$, we first note that by transportability of CATE and definition of $\pobsres$, we have
\begin{equation*}
\mu(\pobsres_X, \pfullobsres) = \mu(\pobsres_X, \pfullobs) 
= \EE_{\pobsres_X}\left[ \EE_{\pfullobs}[Y(1) - Y(0) \mid X] \right]  = \EE_{\pobsres_X}\left[ \EE_{\pfullrct}[Y(1) - Y(0) \mid X] \right]. 
\end{equation*}

Furthermore, it holds via \Cref{asm:nested_support} (support inclusion) and the definition of $\pobsres$ that
\begin{align*}
\EE_{\prct_X}\left[ \EE_{\pfullrct}[Y(1) - Y(0) \mid X] \frac{\pobsres(X)}{\prct(X)} \right] 
&=\EE_{\pxrct}\left[ \left( \EE_{\prct}[Y \mid T=1, X] - \EE_{\prct}[Y \mid T=0, X]  \right) \frac{\pobsres(X)}{\prct(X)} \right] \\
&= \EE_{\prct}\left[ Y\left( \frac{T}{\pi} - \frac{(1-T)}{1-\pi} \right) \frac{\pobsres(X)}{\prct(X)} \right], 
\end{align*}
where the last equality again
follows from the internal validity of the randomized trial.

\subsubsection{Proof of~\Cref{prop:test}}
\label{apx:proof_test}
First, observe that by definition,
\begin{equation}
\{\test_\diamond(\confvalue,\alpha) = 1\} \implies
\{\hat{T}_\confvalue^+  \leq z_{\alpha/2} \} \cup \{  \hat{T}_\confvalue^-  \leq z_{\alpha/2} \}. 
\end{equation} 
Hence, if $\p_{\hnull(\Gamma)}(\hat{T}_\confvalue^-  \leq z_{\alpha/2}) \leq \frac{\alpha}{2} + o(1)$ and $\p_{\hnull(\Gamma)}(\hat{T}_\confvalue^+  \leq z_{\alpha/2}) \leq \frac{\alpha}{2} + o(1)$, the result follows from the union bound.
For brevity, we only prove $\hat{T}_\confvalue^+ \leq z_{\alpha/2}$ as the proof for $\hat{T}_\confvalue^-$ is analogous.

\paragraph{Proof of case $\diamond=\rct$} 
 Let $(X_i,Y_i,T_i)$ be i.i.d. sampled from $\prct$ and define \begin{align*}Z&=\left(\frac{YT}{\pi} - \frac{Y(1-T)}{1-\pi}, \quad\cate_\confvalue^+(X)\right)^T,\quad  \text{with} \\ \mu &\defeq \EE_{\prct}[Z] = \left(\yone\left(\pxrct, \pfullrct\right), \quad \yone_\confvalue^+ \left(\pxrct\right)\right)^T \quad \text{and} \quad \Sigma:=\operatorname{Cov}_{\prct}(Z)= \left(\begin{array}{cc}\sigma^2 & \xi_\confvalue  \\ \xi_\confvalue & (\sigma^+_\confvalue)^2 \end{array} \right) < \infty.\end{align*}
Further, we define  
$$
\bar{Z}_n=\left( \frac{1}{\nrct}  \sum_{(T_i,Y_i)  \in \datarct}\frac{Y_iT_i}{\pi} - \frac{Y_i(1-T_i)}{1-\pi}   , \frac{1}{\nrct}  \sum_{X_i  \in \datarct} \cate_\confvalue^+(X_i) \right).
$$
By the multivariate central limit theorem, we have 
$$
\sqrt{\nrct}\left(\bar{Z}_n-\mu\right) \stackrel{\mathcal{D}}{\longrightarrow} \mathcal{N}(0, \Sigma), \;\;\text{as}\;\;\nrct \to \infty.
$$
Thus, it follows from the Cramér-Wold theorem that 
$$
\sqrt{\nrct}\left[\left(\frac{1}{\nrct}  \sum_{X_i  \in \datarct} \cate_\confvalue^+(X_i)  - \yonehat\right)-\left(\yone_\confvalue^+\left(\pxrct\right) - \yone\left(\pxrct, \pfullrct \right)\right)\right]\xrightarrow{\mathcal{D}} \gauss \left(0,  \sigma^2  +(\sigma^+_\confvalue)^2-2\xi_\confvalue\right),\;\;\text{as}\;\;\nrct \to \infty.$$
It remains to show that asymptotic normality also holds when we replace the oracle average $\frac{1}{\nrct}\sum_{X_i \in \datarct} \cate_\confvalue^+(X_i)$ by the empirical estimator $\yonehat_\confvalue^+$. To do so, define
\begin{equation}
\label{eq:convg}
R_n \defeq \sqrt{\nrct}\left(\frac{1}{\nrct}\sum_{X_i \in \datarct} \cate_\confvalue^+(X_i) - \yonehat_\confvalue^+(X_i)\right).
\end{equation}
Let $d_n(x) \defeq \cate_\confvalue^+(x) - \catehat_\confvalue^+(x)$. Then, conditional on the observational dataset $\dataobs$ used to train $\yonehat^+_\confvalue(\cdot)$, 
\begin{align*}
\left|\EE_{\prct}\left[R_n \mid \dataobs\right]\right|
\leq \sqrt{\nrct}\,\EE_{\prct}\left[|d_n(X)| \mid \dataobs\right]
\leq \sqrt{\nrct}\,\|d_n\|_{L^2(\prct)}
= O_{\pobs}\left(\sqrt{\nrct/\nobs}\right)
= o_{\pobs}(1),
\end{align*}
where we used $\|d_n\|_{L^2(\prct)} = O_{\pobs}(\nobs^{-1/2})$ and $\nrct/\nobs \to 0$. Similarly,
\[
\var_{\prct}\left[R_n \mid \dataobs\right]
\leq \EE_{\prct}\left[d_n(X)^2 \mid \dataobs\right]
= \|d_n\|_{L^2(\prct)}^2
= O_{\pobs}(\nobs^{-1})
= o_{\pobs}(1).
\]
Hence, by Chebyshev's inequality, for every $\varepsilon > 0$,
\[
\prct\left(|R_n|>\varepsilon \mid \dataobs\right)
\leq \frac{1}{\varepsilon^2}\left(\var_{\prct}\left[R_n \mid \dataobs\right] + \EE_{\prct}\left[R_n \mid \dataobs\right]^2\right)
= o_{\pobs}(1).
\]
Since $0 \leq \prct\left(|R_n|>\varepsilon \mid \dataobs\right) \leq 1$, this implies
\[
\mathbb P\left(|R_n|>\varepsilon\right)
= \EE_{\pobs}\left[\prct\left(|R_n|>\varepsilon \mid \dataobs\right)\right]
\longrightarrow 0,
\]
where $\mathbb P$ denotes the joint law of $(\dataobs,\datarct)$. Thus, $R_n=o_{\mathbb P}(1)$ under the joint law, and by the asymptotic normality established above and Slutsky's theorem, we have, as $\nrct \to \infty$ and $\nobs \to \infty$,
\begin{equation}
	\label{eq:as_norm_1}
\sqrt{\nrct}\left[\left(\yonehat_\confvalue^+ - \yonehat\right)-\left(\yone_\confvalue^+\left(\pxrct\right) - \yone\left(\pxrct, \pfullrct \right)\right)\right]\xrightarrow{\mathcal{D}} \gauss \left(0,  \sigma^2  +(\sigma^+_\confvalue)^2 -2\xi_\confvalue\right).
\end{equation}

Finally, by the consistency of $\sigmahat^2, \varubhat$ and Slutsky's theorem, we have
\begin{align*}
\lim_{\nrct,\nobs \to \infty} \mathbb P_{\hnull(\Gamma)} \left(\hat{T}_\confvalue^+  \leq z_{\alpha/2} \right)&= \lim_{\nrct,\nobs \to \infty} \mathbb P_{\hnull(\Gamma)}\left(\frac{\yonehat^{+}_\confvalue - \yonehat }{\sqrt{\varubhat + \sigmahat^2 +2\sigmahat^+_\confvalue\sigmahat} }  \leq z_{\alpha/2} \right)\\
&=\lim_{\nrct,\nobs \to \infty} \mathbb P_{\hnull(\Gamma)}\left(\frac{\sqrt{\nrct}(\yonehat^{+}_\confvalue - \yonehat)}{\sqrt{(\sigma^+_\confvalue)^2 + \sigma^2 +2\sigma^+_\confvalue \sigma} }  \leq z_{\alpha/2} \right) \\ 
&\leq \lim_{\nrct,\nobs \to \infty} \mathbb P_{\hnull(\Gamma)}\left(\frac{\sqrt{\nrct}(\yonehat^{+}_\confvalue - \yonehat)}{\sqrt{(\sigma^+_\confvalue)^2 + \sigma^2-2\xi_\confvalue} }  \leq z_{\alpha/2} \right),
\end{align*}
where in the last line, we use the Cauchy--Schwarz covariance inequality, i.e. $|\xi_\confvalue| \leq \sigma^+_\confvalue\sigma$. Further, under $\hnull(\Gamma)$, it holds that $\yone_\confvalue^+\left(\pxrct\right) -\yone\left(\pxrct, \pfullrct \right)\geq 0$ by~\Cref{lemma:test_1}. Then, by asymptotic normality established in~\Cref{eq:as_norm_1}, we conclude that
\begin{align*}
\lim_{\nrct,\nobs \to \infty} \mathbb P_{\hnull(\Gamma)}\left(\hat{T}_\confvalue^+  \leq z_{\alpha/2} \right)&\leq \lim_{\nrct,\nobs  \to \infty} \mathbb P_{\hnull(\Gamma)} \left(\frac{\sqrt{\nrct}\left((\yonehat^{+}_\confvalue - \yonehat) -\yone_\confvalue^+\left(\pxrct\right) +\yone\left(\pxrct, \pfullrct \right)\right)}{\sqrt{(\sigma_\confvalue^+)^2 + \sigma^2-2\xi_\confvalue} }  \leq z_{\alpha/2} \right) \\
&= \Phi(z_{\alpha/2})=\alpha/2. 
\end{align*}

\paragraph{Proof of case $\diamond=\obsres$}
Let $n=\nrct+\nobs$ with fixed proportions, where $\nrct=\rho n$ and $\nobs=(1-\rho)n$ for $\rho \in (0,1)$. By the central limit theorem and~\Cref{lemma:identify_ate}, it holds that
$$
\sqrt{n}\left(\frac{1}{\nrct}\sum_{\substack{(X_i,T_i,Y_i)  \in \datarct}} \left(\frac{Y_iT_i}{\pi} - \frac{Y_i(1-T_i)}{1-\pi}  \right)w(X_i) - \yone\left(\pobsres_X, \pfullobs \right)\right)\xrightarrow{\mathcal{D}} \gauss\left(0, \sigma^2/\rho \right) \quad \text{as} \quad n \to \infty.$$
Then, from the asymptotic normality of $\yonehat_\confvalue^+$ and the independence $\yonehat_\confvalue^+\indep \yonehat$, we have
$$
\sqrt{n}\left(\begin{array}{c} \yonehat_\confvalue^+ - \yone_\confvalue^+  \\\yonehat - \yone \end{array}\right)
\stackrel{\mathcal{D}}{\rightarrow} \mathcal{N}\left(\left(\begin{array}{l}0 \\ 0 \end{array}\right),\left[\begin{array}{cc}(\sigma_\confvalue^+)^2/(1-\rho) & 0  \\ 0 & \sigma^2/\rho \end{array} \right]\right).
$$
Hence, by the $\delta$-technique with $h(X) = X_1 - X_2$, we get 
$$
\sqrt{n}\left(\yonehat_\confvalue^+ - \yonehat - \yone_\confvalue^+\left(\pobsres_X \right) + \yone\left(\pobsres_X, \pfullobs \right)\right)\xrightarrow{\mathcal{D}} \gauss \left(0, \frac{(\sigma^+_\confvalue)^2}{1-\rho} + \frac{\sigma^2}{\rho} \right) \quad \text{as} \quad  n \to \infty.
$$
Finally, from the consistency of $\sigmahat^2, \varubhat$ and Slutsky's theorem,  it holds that
$$
\frac{\yonehat^{+}_\confvalue - \yonehat - \yone_\confvalue^+\left(\pobsres_X \right) + \yone\left(\pobsres_X, \pfullobs \right)}{\sqrt{\varubhat+ \sigmahat^2}} \xrightarrow{\mathcal{D}}  \gauss\left(0, 1\right) \quad \text{as} \quad  n \to \infty.
$$
As before,  asymptotic normality implies that
\begin{align*}
\lim_{n\to \infty} \mathbb P_{\hnull(\Gamma)} \left(\hat{T}_\confvalue^+  \leq z_{\alpha/2} \right)&= \lim_{n\to \infty} \mathbb P_{\hnull(\Gamma)} \left(\frac{\yonehat^{+}_\confvalue - \yonehat }{\sqrt{\varubhat + \sigmahat^2} }  \leq z_{\alpha/2} \right)\\
&\leq \lim_{n\to \infty} \mathbb P_{\hnull(\Gamma)}\left(\frac{\yonehat^{+}_\confvalue - \yonehat -\yone_\confvalue^+\left(\pobsres_X \right) + \yone\left(\pobsres_X, \pfullobs \right) }{\sqrt{\varubhat + \sigmahat^2} }  \leq z_{\alpha/2} \right)
\\&= \Phi(z_{\alpha/2})=\alpha/2.
\end{align*}


\subsection{Nested design}
\label{apx:nested_design}
 In a nested trial design, the randomized trial is embedded in a cohort of eligible people who are proposed to participate in the trial, but if they refuse, they are still included in the observational study. 
Two concrete examples of nested designs are the Women Health Initiative~\citep{anderson2003implementation} and the recent study on Medicaid~\citep{degtiar2021conditional}.

In contrast to~\Cref{sec:setting}, the nested design has an extra variable $S \in \{0,1\}$, which is a binary indicator for randomized trial participation.
 More formally, we observe i.i.d. samples from an underlying distribution $\pfullnest$ over $\left(X, U, Y(0), Y(1),Y, S, T\right)$. Further, let $\pnest \defeq \marg(\pfullnest)$ be the marginal distribution  over $(X,Y,S,T)$. We can then  write the marginal distributions over $X$ of the  (restricted) observational study and randomized trial  as 
$$
\prct_X(X) =  \pnest\left(X \mid S=1 \right) \quad \mathrm{and}  \quad 
\pobsres_X(X) = \pnest\left(X \mid S=0, X \in \supp(\prct) \right).$$
This study design has a significant impact on the estimators previously introduced. In particular, the importance weights $w(X)$ can be estimated by pooling the observational study and the randomized trial as follows 
\begin{align*}
	w(X) &= \frac{\pobsres(X)}{\prct(X)} \\&= \frac{\pnest(X\mid S=0, X\in \supp(\prct))}{\pnest(X \mid S=1)}\\
	&= \frac{\pnest(S=0\mid X) }{\pnest(S=1\mid X)} \frac{\pnest(S=1 \mid X \in \supp(\prct))}{\pnest(S=0 \mid X \in \supp(\prct))},
\end{align*}
where the sampling probability $\pnest(S\mid X)$ can be identified under a nested study design~\citep{dahabreh2021study}.

\subsection{Limitations of MSM}
\label{apx:limitations_msm}
We discuss here the intuition behind the tightness of $\gammalb$ in the infinite-sample limit, though it carries over to finite samples. Without loss of generality, we focus on the lower bound derived from $\test_{\obsres}$ and assume that the unobserved confounding biases the average treatment effect upwards, i.e.
$$
\yone^-_{\confvalue=1}(\pobsres_X) = \yone^+_{\confvalue=1}(\pobsres_X) \geq \yone(\pobsres_X, \pfullobsres),
$$
where $\yone^-_{\confvalue=1}(\pobsres_X)$ and $\yone^+_{\confvalue=1}(\pobsres_X)$ are the IPW estimates of ATE on the (restricted) marginal distribution of the observational study.
Further, we define the infinite-sample lower bound as
$$
\gammalbinfinite \defeq \inf \left \{ \confvalue: \lim_{n\to \infty} \test_{\obsres}(\confvalue,\alpha) =0 \right \},
$$
which is, in words, the smallest value of $\confvalue$ such that the sensitivity bounds include the true ATE.
The key observation is that the interval $[\yone^-_\confvalue(\pobsres_X),\yone^+_\confvalue(\pobsres_X)]$ widens with $\confvalue$: the lower sensitivity bound is non-increasing and the upper sensitivity bound is non-decreasing in $\confvalue$.
Under the upward-bias assumption above, the interval first contains the true ATE when the lower bound drops below it. Hence,
\begin{align*}
\gammalbinfinite
= \inf \left \{ \confvalue: \yone^-_{\confvalue}(\pobsres_X) \leq \yone(\pobsres_X,\pfullobsres) \right \}.
\end{align*}
 If one additionally assumes that $\confvalue \mapsto \yone^-_{\confvalue}(\pobsres_X)$ is continuous and strictly decreasing in a neighborhood of the crossing, then the inequality above can be replaced by equality.

The looseness of the lower bound can be characterized by
$\Delta \defeq \Gammatrue - \gammalbinfinite$: if $\Delta > 0$ even in the infinite-sample limit, the lower bound will not be tight. We discuss two interesting cases:

\paragraph{Case 1.} If
		$$
		\pfullobsres \in \underset{\pfulltilde \in \msm(\pobsres, \Gammatrue)}{\argmin}\, \yone(\pobsres_X,\pfulltilde),
		$$
	then $\yone_{\Gammatrue}^-(\pobsres_X) = \yone(\pobsres_X,\pfullobsres)$ and therefore $\Delta=0$. This corresponds to the case in which the observed full distribution is itself extremal for the lower ATE bound on the target population. It is also in line with worst-case constructions where the hidden confounder encodes the potential outcomes, for example $U=(Y(1),Y(0))$; see~\citet{dorn2022sharp} for a closed-form solution of the ATE sensitivity bounds.

\paragraph{Case 2.} If $U \indep (Y(1),Y(0))$, then $\yone^-_{\confvalue=1}(\pobsres_X) = \yone^+_{\confvalue=1}(\pobsres_X) = \yone(\pobsres_X, \pfullobsres)$. Hence, $\Delta=\Gammatrue -1$ and the lower bound can be arbitrarily loose.

\label{../apx:limitations_msm}
\section{Experimental details}
\subsection{Synthetic experiments}
\label{apx:syn_exp}
We design the propensity scores such that the data distribution satisfies the MSM with true confounding strength $\confvalue^*=5$. To do so, we define the \emph{adversarial propensity score} as 
\begin{align*}
&e^+(X,U) = \begin{cases}
        \ell(X) & \text{if } U>t(X) \\
        u(X) & \text{if }  U \leq t(X)
    \end{cases}, \quad \text{where} \quad \ell(X)=\frac{e(X)}{e(X)+(1-e(X))\Gammatrue},  &u(X)=\frac{e(X)}{e(X)+(1-e(X)) / \Gammatrue}
\end{align*}
are respectively the lower and upper bounds on the full propensity score under the MSM. 
By choosing $t(X) = \frac{e(X) - \ell(X)}{u(X) -\ell(X)}$ in our data-generating process in~\Cref{sec:datasets} where $U \sim U(0,1)$, we ensure 
that $\EE_{\pobs}[e^+(X,U) \mid X] = e(X)$. 
We note that this is different from~\cite{kallus2019interval,jesson2021quantifying,oprescu2023b} where they choose a fixed threshold $t(X)=1/2$, resulting in a data distribution that does not satisfy the MSM. For all synthetic experiments, we estimate the propensity score using logistic regression. Further, we set $\nbs = 100, \sigma^2_Y=0.1, \alpha=0.05$, and report the mean and standard error over 20 runs.

For the test $\test_{\obsres}$, we use the sensitivity bound estimator QB~\citep{dorn2022sharp}, and fit the quantile function using quantile forest regression~\citep{meinshausen2006quantile}. For the test $\test_{\rct}$, we use the sensitivity bound estimator B-Learner~\citep{oprescu2023b}. We fit the quantile function using quantile forest regression~\citep{meinshausen2006quantile}, and the outcome model using a random forest regressor.

\subsection{Semi-synthetic experiments}
\label{apx:semisyn_exp}
We provide the details of the semi-synthetic experiments in \Cref{sec:datasets}. Specifically, we describe the subsampling procedure used to generate a randomized trial and an observational study that satisfy our setting, along with additional information about the datasets employed.

\subsubsection{Subsampling procedure}

\begin{algorithm}
\caption{Randomized Trial Rejection Sampling \citep{keith2023rct}}
\label{algo:rejection}
\begin{algorithmic}[1]
\State \textbf{Inputs:} $D \setminus \datarct = \{(X_i, U_i, Y_i,T_i)\}_{i=1}^{n}$; $\pconfobs(T = 1 \mid U)$, a function specified by the user; $M$ 
a constant computed empirically.
\State \textbf{Output:} $\dataobs$.
\State $\dataobs \leftarrow D \setminus \datarct$
\While{true}
\For{each unit $i$ in $\dataobs$}
\State Sample $K_i$ uniform on $(0, 1)$
\If{$K_i > \frac{\pconfobs(T=T_i \mid U_i)}{\widehat{\p}(T=t_i) M}$}
\State  $\dataobs \leftarrow \dataobs \setminus \{(X_i, U_i, Y_i,T_i)\}$
\EndIf
\EndFor
\State \textbf{break} if no units were discarded in the last iteration
\EndWhile
\State Remove $U$ from $\dataobs$
\end{algorithmic}
\end{algorithm}

We now detail the procedure for constructing a randomized trial and multiple observational datasets for our semi-synthetic experiments. 
Given a large-scale real-world randomized trial $D$ with covariates $\Xall$, our objective is to create multiple observational datasets $\dataobs$ that differ in the correlation $\rho_{u,y} = \frac{\operatorname{Cov}_{\pfullobs}[Y(1),U]}{\sigma_{Y(1)} \sigma_U}$, i.e. $\pinv$, but have the same confounding strength $\Gammatrue$, i.e. $\pconfobs$. This setup allows us to separately understand the effect of $\rho_{u,y}$ on the power of the test. While we cannot directly intervene on $\pinv (Y(1), Y(0)|\Xall)$ as it is intrinsic to the dataset, we can hide different $U \in \Xall$ for each $\dataobs$, resulting in different  $\pinv (Y(1), Y(0)|U)$  and hence correlation coefficient $\rho_{u,y}$. 

For each candidate hidden confounder $U$ within $\Xall$, we implement the following steps:  First, we select a subset from $D$ to construct our randomized trial dataset, $\datarct$, and remove $U$ from $\Xall$.  Next, we \emph{subsample} $D \setminus \datarct$ to generate a dataset $\dataobs$ that belongs to $\msm(\pobs, \Gammatrue)$. We enforce this constraint by constructing a specific propensity score $\pconfobs(T = 1\mid U)$  (detailed in the sequel) and employing the subsampling procedure in~\Cref{algo:rejection} using this  $\pconfobs(T = 1\mid U)$. Note that for simplicity, we choose a propensity score that does not depend on $X$; this is consistent with the graphical model in Figure~\ref{fig:dag}.

We use different $\pconfobs(T = 1\mid U)$ for continuous and binary confounders.
We first define
\begin{equation}
\label{eq:uplow}
\ell = \frac{\hat{\pi}}{\hat{\pi} + [1-\hat{\pi}]\Gammatrue}, \quad
u = \frac{\hat{\pi}}{\hat{\pi} + [1-\hat{\pi}]/\Gammatrue},
\end{equation}
where we estimate $\hat{\pi} = \widehat{\p}(T = 1)$ from the split $D \setminus \datarct$ before subsampling.
For a \emph{continuous} confounder $U$ positively correlated with $Y(1)$, we use the following full propensity score in~\Cref{algo:rejection}:
\begin{equation}
\label{adv_semi}
\pconfobs(T=1 \mid U) = 
\begin{cases}
    \ell & \text{if } U > Q_{\hat{q}^*}(U), \\
    u & \text{if } U \leq Q_{\hat{q}^*}(U),
\end{cases}
\end{equation}
where $Q_q(U) = \inf \left\{ z \in \mathbb{R} : {\pconfobs(U \leq z)} \geq q \right\}$ is the $q$-th quantile of the marginal distribution of $U$ in $D \setminus \datarct$ before subsampling.
Since the propensity score does not depend on $X$, using $\hat{q}^\star = \frac{\hat{\pi} - \ell}{u - \ell}$ makes sure that 
$$e(X)=\pconfobs(T = 1|X)=\pconfobs(T = 1) = \hat \pi$$
and hence the target full propensity score satisfies the marginal constraint. 
Note that this is equivalent to enforcing the same marginal propensity score before and after the subsampling. 
For a negatively correlated continuous confounder, we choose $\hat{q}^\star = \frac{u - \hat{\pi}}{u - \ell}$ and change the direction of the inequalities in~\Cref{adv_semi}.

Further, for a positively correlated \emph{binary} confounder, we use the following propensity score in~\Cref{algo:rejection}:
\begin{equation}
\label{adv_semi_binary}
\pconfobs(T=1 \mid U) = 
\begin{cases}
    \ell & \text{if } U = 1, \\
    u & \text{if } U = 0.
\end{cases}
\end{equation}
By first subsampling $D \setminus \datarct$ such that
\begin{align*}
\widehat{\p}(U = 1) &= \frac{u - \hat{\pi}}{u - \ell},
\end{align*}
and then applying~\Cref{algo:rejection} with the propensity score from~\Cref{adv_semi_binary}, we again obtain $e(X)=\pconfobs(T = 1) = \hat \pi$.
For a negatively correlated binary confounder, we first enforce
$\widehat{\p}(U = 1) = \frac{\hat{\pi} - \ell}{u - \ell}$ and swap $\ell$ for $u$ in~\Cref{adv_semi_binary}, and vice versa.

The subsampling procedure in \Cref{algo:rejection} allows for the construction of observational datasets where the causal effect is identifiable non-parametrically, in contrast to previous approaches, such as \citep{gentzel2021and}, that do not guarantee identifiability after subsampling. In \Cref{algo:rejection}, 
we note that  $M=\frac{\max _{i \in\{1, \ldots, n\}} \pconfobs\left(T=t_i \mid U_i\right)}{\min _{i \in\{1, \ldots, n\}} \widehat{\p}\left(T=t_i\right)}$, which is required to ensure that the causal effect is identifiable in the subsampled dataset (see Theorem 3.2 in~\cite{keith2023rct}). 
We empirically observe that the subsampling procedure discards half of the instances from  $D \setminus \datarct$.

\subsubsection{Datasets details}
\label{apx:dataset_details}

We give additional details about the three datasets we use for the semi-synthetic experiments.

\begin{itemize}
    \item \textbf{Hillstrom's MineThatData Email data}~\citep{hillstrom2008}. The Hillstrom dataset contains records of 64,000 customers who purchased within the last twelve months. They were part of an e-mail campaign to assess the effectiveness of distinct campaign strategies. Two treatment groups, ``Men’s" and ``Women’s" email campaigns, and a control group were established. Treatments were randomly assigned. Our analysis primarily focuses on a combined treatment group, which constitutes roughly 66\% of the dataset. While the original dataset has different outcomes, we looked at the dollars spent in the two weeks post-campaign. The dataset provides data on recent purchase patterns (Recency), annual spending categories (History Segment) and values (History), merchandise type, either Mens (Mens) or Womens (Womens), geographical location via zip code (Zip Code), newcomer status (Newbie), and purchasing avenues (Channel). After subsampling, we end up with a randomized trial of size $\nrct=7680$ and an observational dataset of size $\nobs=20500$. \Cref{asm:nested_support} is enforced by excluding urban zip codes in the trial.

    \item \textbf{VOTE dataset}~\citep{gerber2008social}. The VOTE dataset studies the effect of social pressure on voting behaviors among Michigan's registered voters, focusing on those who voted in the prior election and met certain criteria. The primary outcome is a binary variable indicating whether the letter recipients voted. In this randomized trial, participants were allocated to a control group or one of four treatment groups. The treatment groups received distinct letters, each varying in social pressure intensity, aimed at encouraging voting. The most persuasive letter provided insight into the recipient's neighbors' voting patterns from the previous two elections and implied updates on neighbors' subsequent voting actions in future letters. Using the split in \citep{stutz2023}, we incorporated roughly 190,000 samples in the control group, and we kept the treatment group with the strongest letter, leaving about 38,000 samples. We retained preprocessed features like age, household size, gender, and two scores reflecting past voting habits and the voting patterns of neighbors. After subsampling, we end up with a randomized trial of size $\nrct=10650$ and an observational dataset of size $\nobs=36800$.  We discard households with more than 4 participants to enforce \Cref{asm:nested_support}.

    \item \textbf{Tennessee STAR Project}~\citep{word1990state}. The Tennessee STAR experiment, initiated in 1985, was a randomized study examining the impact of class size on students' standardized test scores, tracking them from kindergarten through third grade. Initially, students and teachers were randomly placed into class sizes, intending to maintain these conditions throughout the study. We follow the dataset preprocessing outlined in~\citep{kallus2018removing}.  Their analysis concentrates on two conditions: small classes (13-17 students) and regular-sized classes (22-25 students). They used the class size at first grade as the treatment variable, observing 4,509  students. Their outcome aggregates scores from listening, reading, and math tests at the end of the first grade. After excluding students with missing values, the final sample consisted of 4,218 students: 1,805 in small classes (treatment) and 2,413 in regular-sized classes (control). The observed features for each student are gender, race, birth month, birthday, birth year, free lunch given or not and teacher ID.  After subsampling, we end up with a randomized trial of size $\nrct=600$ and an observational dataset of size $\nobs=1800$. For the STAR project, we keep inner-city and suburban students but remove urban and rural ones to enforce~\Cref{asm:nested_support}.
\end{itemize}

We one-hot-encode all categorical features and standardize in $[0, 1]$ all continuous features.

\paragraph{Implementation }We use QB~\citep{dorn2022sharp} for the test $\test_{\obsres}$. For continuous outcomes (Hillstrom and STAR), we fit a random forest regressor for the quantile functions, while we leverage the closed-form solution for the quantiles in the binary case (VOTE). For the test $\test_{\rct}$ we use the B-Learner~\citep{oprescu2023b}, fitting respective random forest regressors for the quantiles, the outcome and the bounds models. For the binary case, we use the closed-form solution for the quantiles and fit the outcome and bounds models with XGBoost~\citep{chen2015xgboost}. We always train a logistic regressor for the propensity score.  We report mean and standard error over 15 runs and set $\nbs = 200, \alpha=0.05$ for all experiments.

\subsection{Women's Health Initiative}
\label{apx:whi_exp}
The Women's Health Initiative (WHI) is a long-term national health study that has focused on strategies for preventing the major causes of death, disability, and frailty in older women, specifically heart disease, cancer, and osteoporotic fractures. This multi-million dollar, 20+ year project, sponsored by the National Institutes of Health (NIH), the National Heart, Lung, and Blood Institute (NHLBI), originally enrolled 161,808 women aged 50-79 between 1993 and 1998. The WHI was one of the most definitive, far-reaching clinical trials of post-menopausal women's health ever undertaken in the US.  

The WHI had two major parts: a Clinical Trial and an Observational Study. The randomized controlled Clinical Trial (CT) enrolled 68,132 women on trials testing three prevention strategies. Eligible women could choose to enrol in one, two, or three of the trial components.
\begin{itemize}
\item Hormone Therapy Trials (HT): This component examined the effects of combined hormones or estrogen alone on the prevention of heart disease and osteoporotic fractures, and associated risk for breast cancer. Women participating in this component took hormone pills or a placebo (inactive pill) until the Estrogen plus Progestin and Estrogen Alone trials were stopped early in July 2002 and March 2004, respectively. All HT participants continued to be followed without intervention until close-out.
\item 	Dietary Modification Trial (DM): The Dietary Modification component evaluated the effect of a low-fat and high-fruit, vegetable and grain diet on the prevention of breast and colorectal cancers and heart disease. Study participants followed either their usual eating pattern or a low-fat dietary pattern.
\item Calcium/Vitamin D Trial (CaD): This component evaluated the effect of calcium and vitamin D supplementation on the prevention of osteoporotic fractures and colorectal cancer. Women in this component took calcium and vitamin D pills or placebos.
\end{itemize}
The Observational Study (OS) examines the relationship between lifestyle, health and risk factors and disease outcomes. This component involves tracking the medical events and health habits of 93,676 women. Recruitment for the observational study was completed in 1998 and participants have been followed since.

To assess our method in a real-world scenario, we use observational study and randomized trial data from the Women’s Health Initiative (WHI). We use the Postmenopausal Hormone Therapy (PHT) trial as the RCT in our analysis ($\nrct=16,608$), which was run on postmenopausal women aged 50-79 years with an intact uterus. The trial investigated the effect of hormone therapy on several types of cancers, cardiovascular events, and fractures, measuring the “time-to-event” for each outcome. In the WHI setup, the observational study component was run in parallel and tracked similar outcomes to the RCT.

\paragraph{Data preprocessing} 
We binarize a composite outcome, called the “global index”, in our analysis, where Y = 1 if coronary heart disease or stroke was observed in the first seven years of follow-up, and Y = 0 otherwise. Note that Y = 0 could also occur from censoring. To establish treatment and control groups in the observational study, we use questionnaire data in which participants confirm or deny usage of combination hormones (i.e. both estrogen and progesterone) in the first three years. Using this procedure, we end up
 with a total of $\nobs = 33,511$ patients. Finally, we restrict the set of covariates used to those that are measured in both the RCT and the observational study.  In particular, we use as covariates only those measured in both the RCT and observational study, and we further restrict them to those identified as significant in epidemiological literature, such as in~\citep{prentice2005combined}. Specifically, the covariates in our analysis are: \texttt{AGE}, \texttt{ETHNIC\_White}, \texttt{BMI}, \texttt{SMOKING\_Past\_Smoker}, \texttt{SMOKING\_Current\_Smoker}, \texttt{EDUC\_x\_College\_graduate\_or\_Baccalaureate Degree}, \texttt{EDUC\_x\_Some\_post-graduate\_or\_professional}, \texttt{MENO}, \texttt{PHYSFUN}. The data used is available on \href{(https://biolincc.nhlbi.nih.gov/studies/whi_ctos)}{BIOLINCC}.

\paragraph{Experimental details}We train logistic regression for both outcome models and propensity score. We use as sensitivity bounds DVDS~\citep{dorn2021doubly} for the test $\test_{\obsres}$, and B-Learner for the test $\test_{\rct}$. We test for confounding in one direction, i.e. we only compute the test statistic $\hat{T}_\confvalue^+$. We set $\nbs=100$ and $\alpha=0.05$ for all experiments.

\label{apx:experimental_details}
\clearpage
\section{Additional Experiments}

\subsection{VOTE dataset}

We present the experimental results with the VOTE dataset in~\Cref{fig:vote_results}. Experiments were conducted with both weak and strong confounders, and under small and large sample regimes. We use the outcome $Y$ as a strong confounder given the lack of a feature highly correlated with the outcome in the dataset. These results corroborate previous findings that higher correlated confounders and larger sample sizes lead to greater power of our test. In all scenarios, the performance of both tests closely aligns.

\begin{figure*}[hb]
    \begin{subfigure}[b]{0.24\textwidth}
        \centering
    \includegraphics[width=\textwidth]{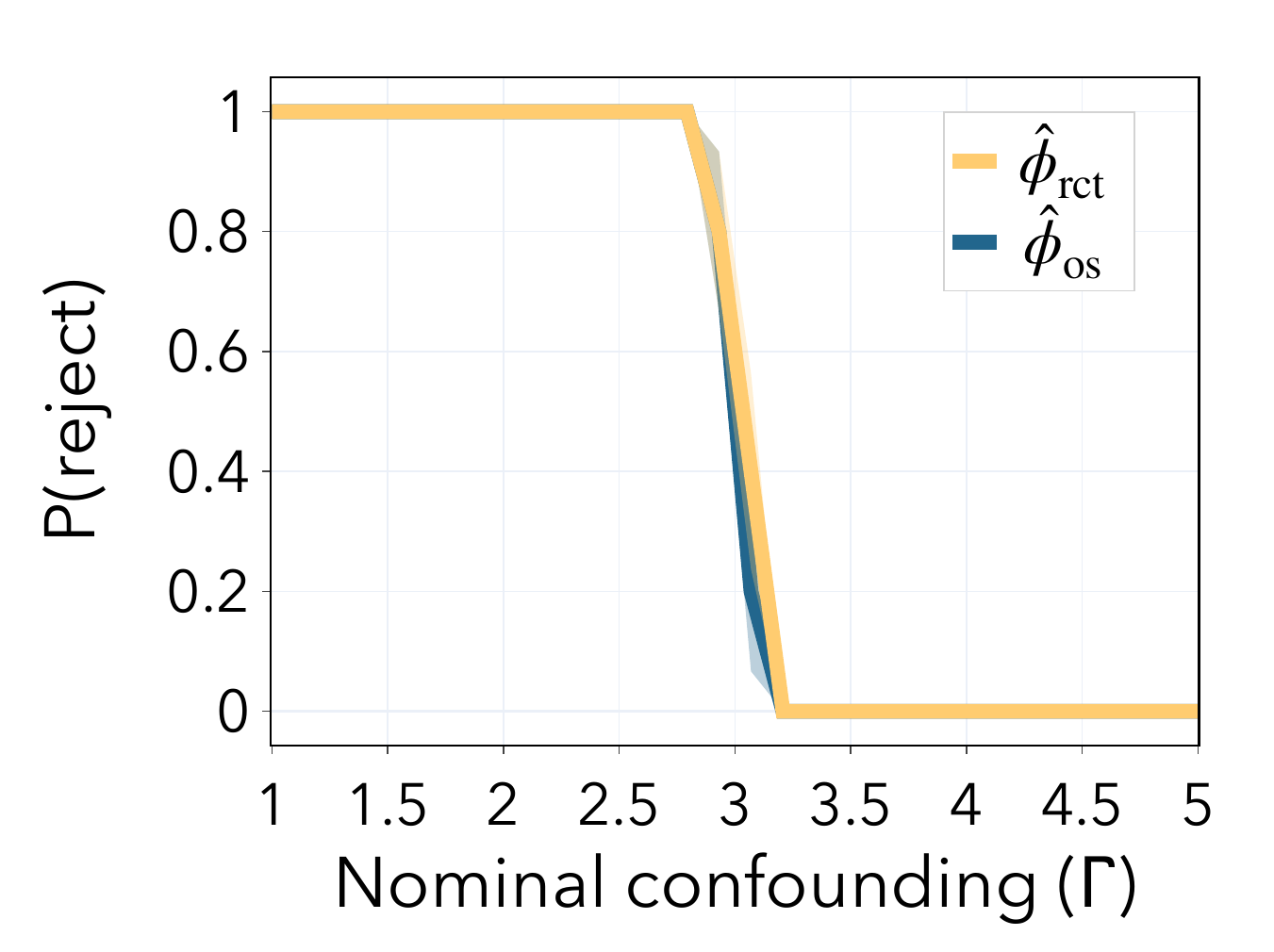}
        \caption{Small sample ``age"}
        \label{vote:weak_small}
    \end{subfigure}
    \hfill
    \begin{subfigure}[b]{0.24\textwidth}
        \centering
\includegraphics[width=\textwidth]{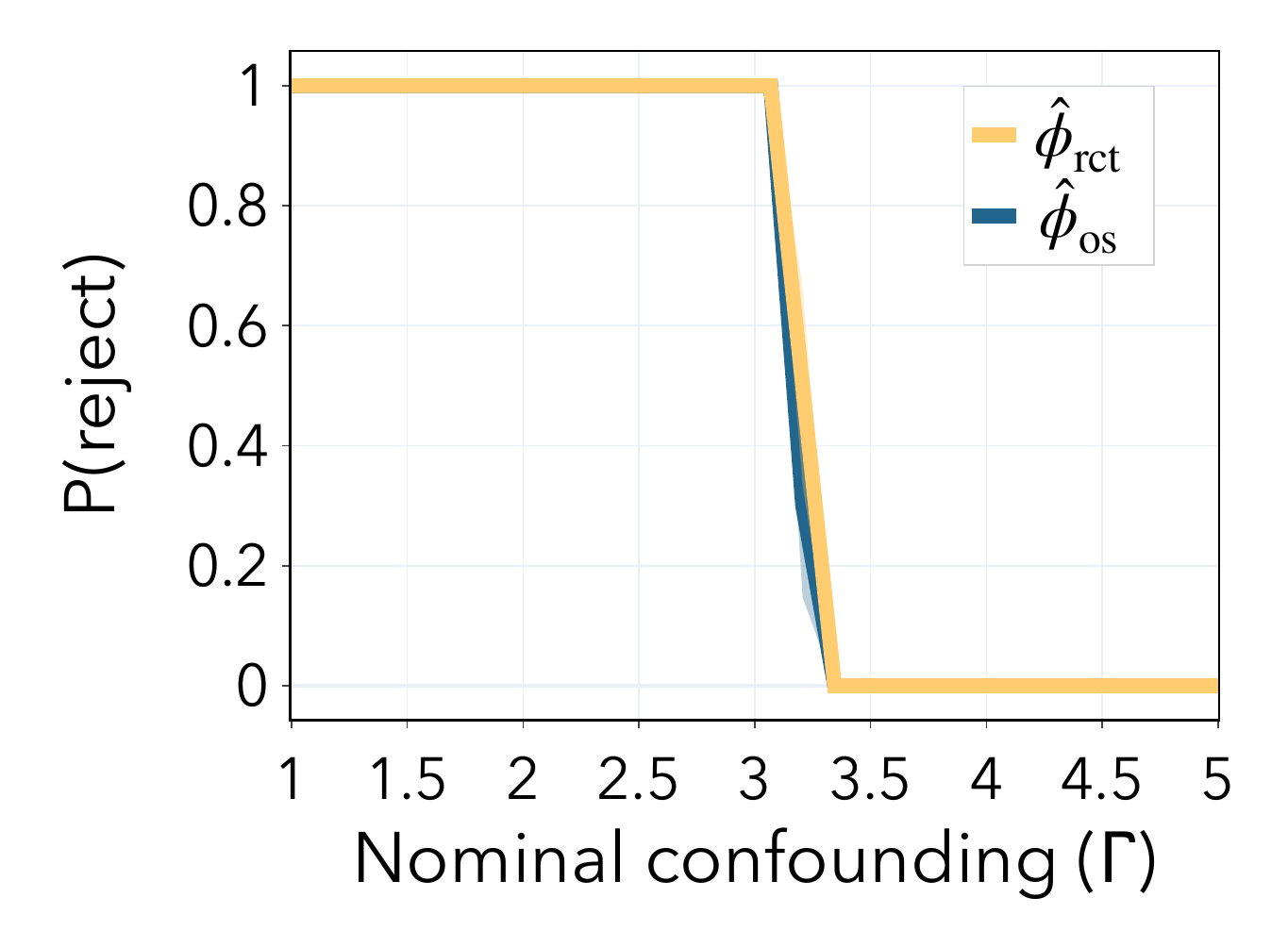}
        \caption{Large sample ``age"}
        \label{vote:weak_large}
    \end{subfigure}
    \hfill
    \begin{subfigure}[b]{0.24\textwidth}
        \centering
   \includegraphics[width=\textwidth]{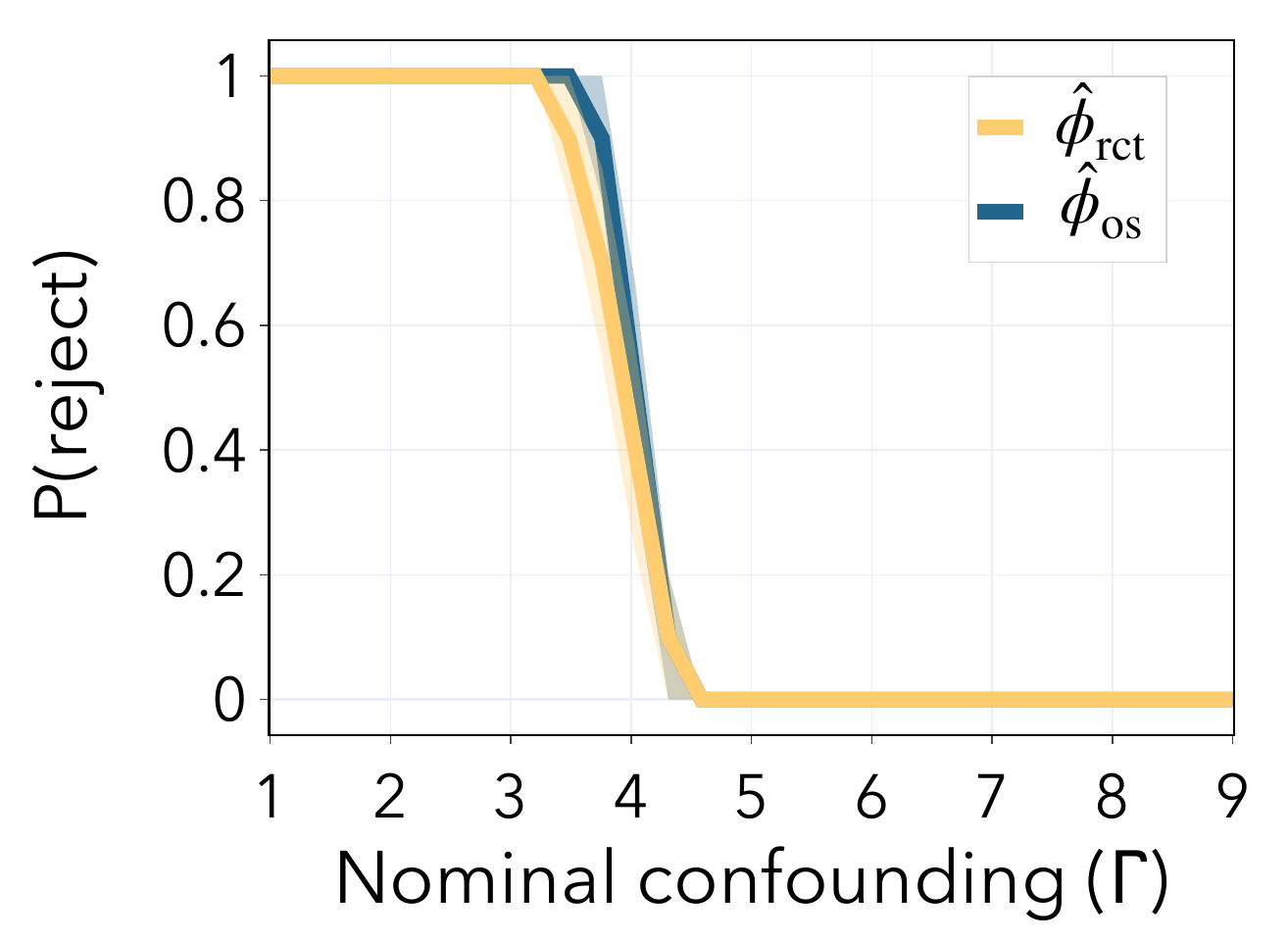}
        \caption{Small sample $Y$}
        \label{vote:strong_small}
    \end{subfigure}
    \hfill
    \begin{subfigure}[b]{0.245\textwidth}
        \centering
    \includegraphics[width=\textwidth]{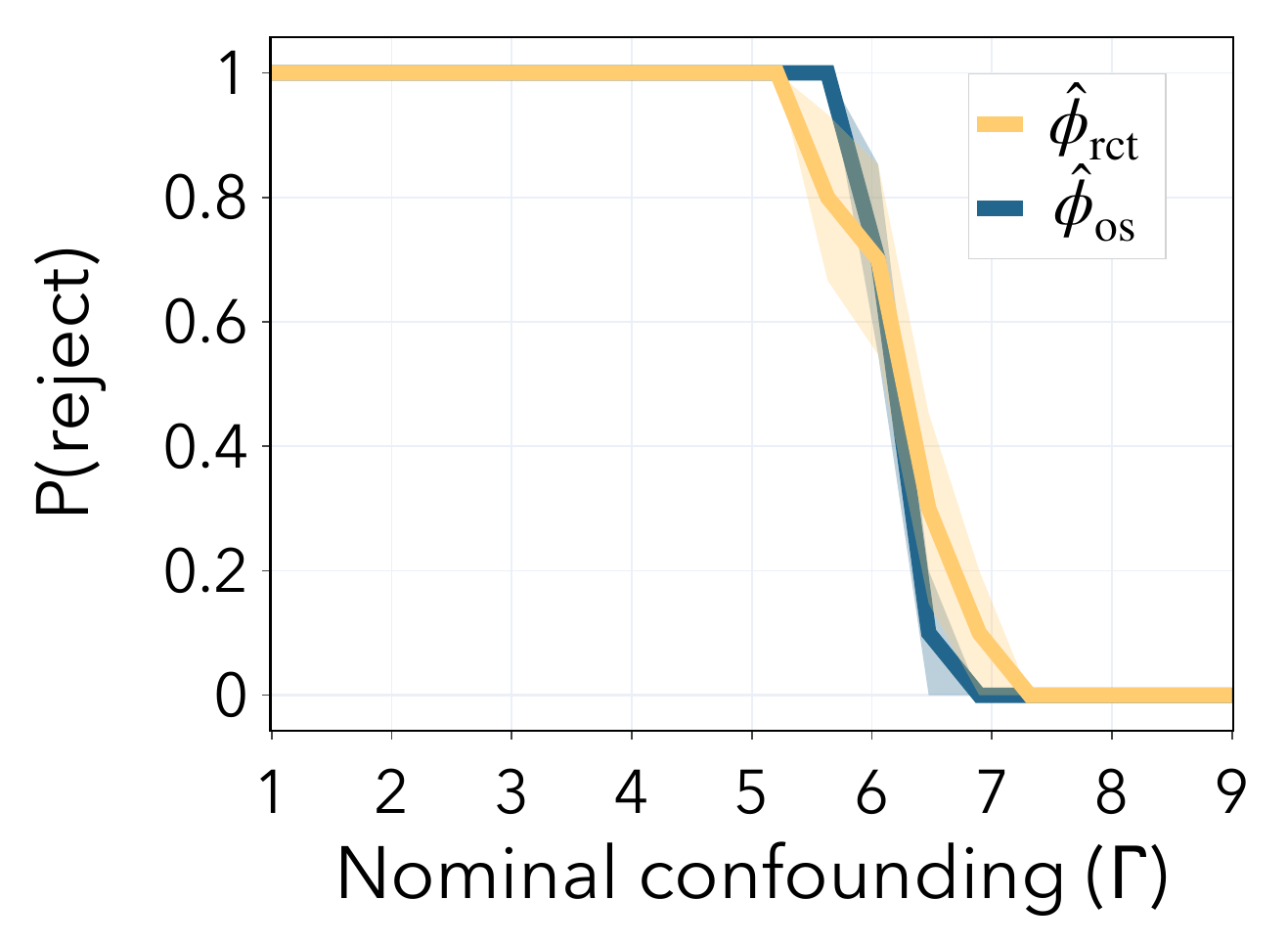}
        \caption{Large sample $Y$}
        \label{vote:strong_large}

    \end{subfigure}
    
    \caption{Probability of rejection for different choices of $\confvalue$ for the test for the VOTE dataset. For all the plots, the significance level is $\alpha=0.05$ and $\Gammatrue=9$. (a)-(b) Weak confounder: ``age". (a) small sample size: $\nrct=3.2K,\nobs=11K$ and (b) large sample size: $\nrct=10.6K,\nobs=36.8K$. (c)-(d) Strong confounder: outcome $Y$. (c) small sample size: $\nrct=3.2K,\nobs=11K$ and (d) large sample size: $\nrct=10.6K,\nobs=36.8K$.
 }
    \label{fig:vote_results}
\end{figure*}

\subsection{Tennessee STAR Project}

 We present the experimental results with the STAR Project in~\Cref{fig:star_results}. Experiments were conducted with both weak and strong confounders with the full dataset. We do not run experiments with a small sample size since the STAR dataset already represents a small sample regime. These results corroborate previous findings that higher correlated confounders lead to greater power. 
 
\begin{figure*}[b]
\centering
    \begin{subfigure}[b]{0.3\textwidth}
        \centering
    \includegraphics[width=\textwidth]{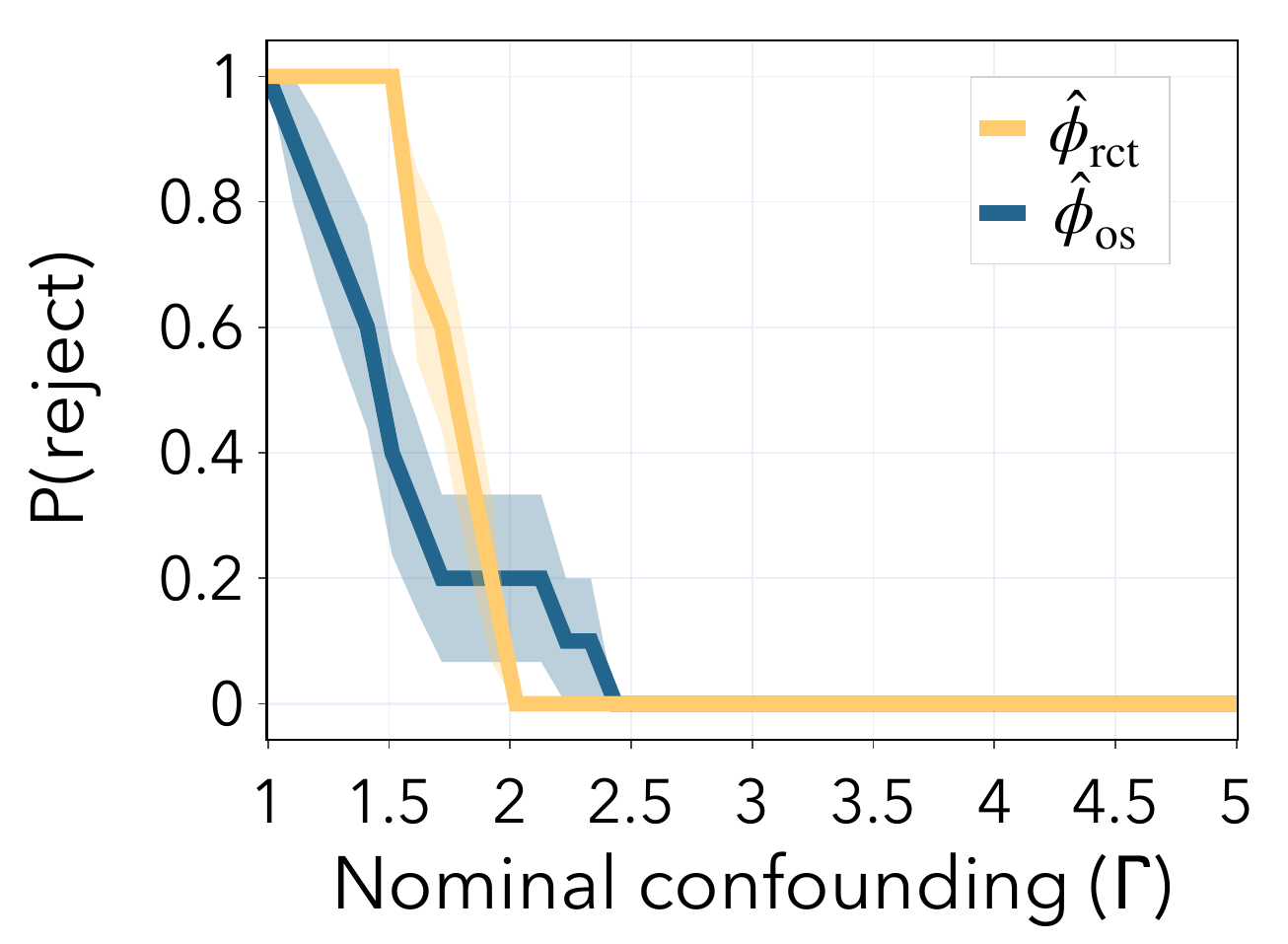}
        \caption{Weak confounder: ``free lunch"}
        \label{star:weak}
    \end{subfigure}%
    \begin{subfigure}[b]{0.3\textwidth}
        \centering
    \includegraphics[width=\textwidth]{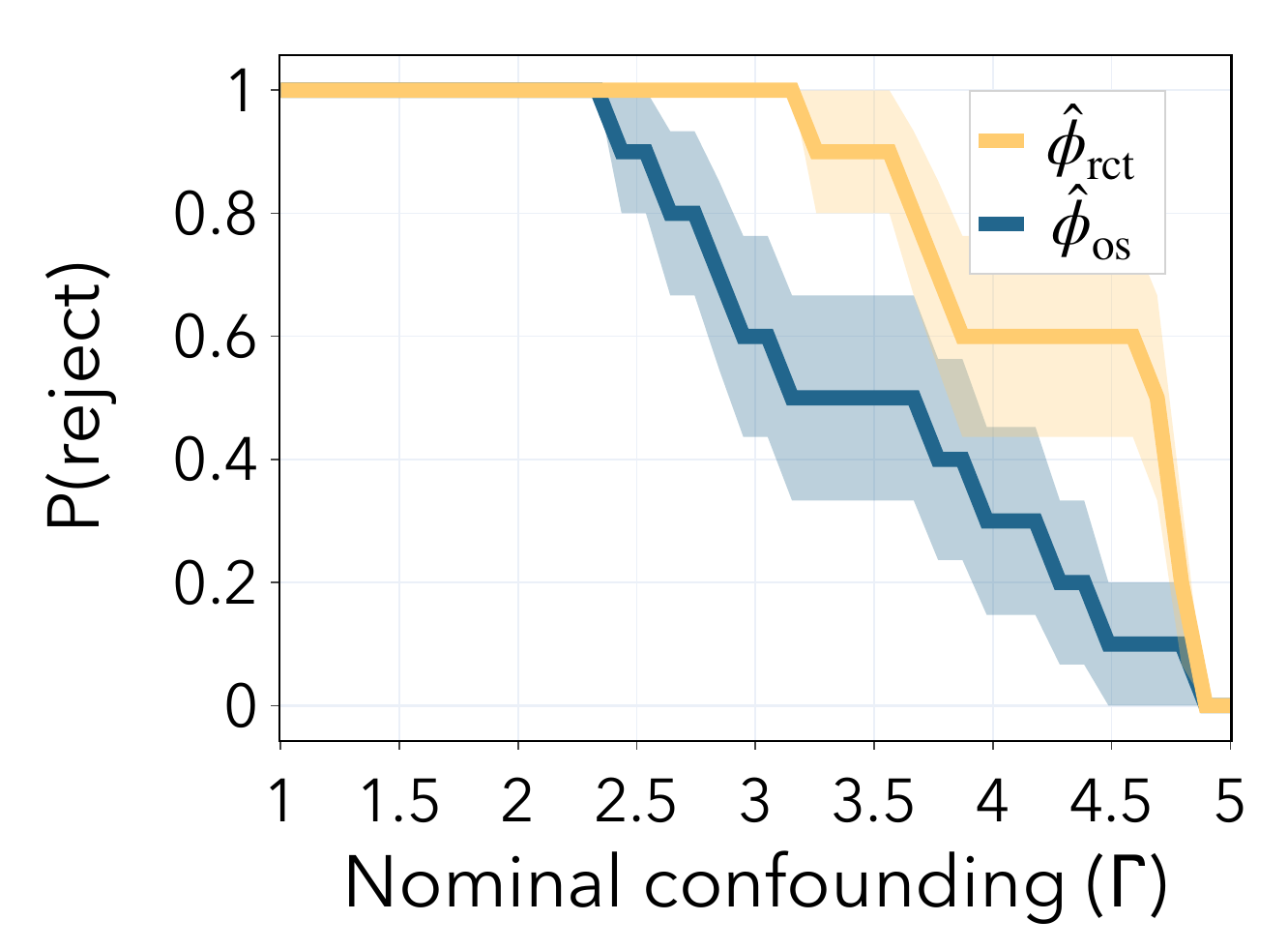}
        \caption{Strong confounder: $Y$}
        \label{star:strong}

    \end{subfigure}
    
    \caption{Probability of rejection for different choices of $\confvalue$ for the test for the STAR Project. For all the plots, the significance level is $\alpha=0.05$ and $\Gammatrue=5$. We use the original sample sizes $\nrct=600,\nobs=1.8K$. (a) weak confounder: ``free lunch" (b) strong confounder: outcome $Y$.
 }
    \label{fig:star_results}
\end{figure*}
\label{apx:additional_experiments}

\end{document}